\newtheorem{theorem}{Theorem}
\newtheorem*{theorem*}{Theorem}
\newtheorem{proposition}{Proposition}
\newtheorem{lemma}{Lemma}
\newtheorem{definition}{Definition}
\newtheorem{remark}{Remark}
\DeclareMathOperator*{\argmin}{arg\,min}
\title{Explaining Kernel Clustering via Decision Trees}
\author{%
 Maximilian Fleissner \\
 Technical University of
 Munich \\
 \texttt{fleissner@cit.tum.de}\\
  \And
  Leena C. Vankadara \\
  Amazon Research  \\ Tübingen \\
  \texttt{vleena@amazon.com} \\
  \And
  Debarghya Ghoshdastidar \\
  Technical University of Munich \\
  \texttt{ghoshdas@cit.tum.de} \\
}
\algrenewcommand\algorithmicrequire{\textbf{Input:}}
\algrenewcommand\algorithmicensure{\textbf{Output:}}
\begin{document}

\maketitle

\begin{abstract}
  Despite the growing popularity of explainable and interpretable machine learning, there is still surprisingly limited work on inherently interpretable clustering methods. Recently, there has been a surge of interest in explaining the classic k-means algorithm, leading to efficient algorithms that approximate k-means clusters using axis-aligned decision trees. However, interpretable variants of k-means have limited applicability in practice, where more flexible clustering methods are often needed to obtain useful partitions of the data. In this work, we investigate interpretable kernel clustering, and propose algorithms that construct decision trees to approximate the partitions induced by kernel k-means, a nonlinear extension of k-means. We further build on previous work on explainable k-means and demonstrate how a suitable choice of features allows preserving interpretability without sacrificing approximation guarantees on the interpretable model.
\end{abstract}

\section{Introduction}

The increasing predictive power of machine learning has made it a popular tool in many scientific fields. Sensitive applications such as healthcare or autonomous driving however require more than just good accuracy---it is also crucial for a model's decisions to be interpretable \citep{tjoa2020survey, varshney2017safety}. Unfortunately, popular machine learning models are not transparent and are often referred to as ``black box'' approaches. The demand for explainable machine learning has led to the development of several tools over the last few years, albeit mostly for supervised learning. Methods such as LIME or Shapley values \citep{ribeiro2016should,lundberg2017unified} are designed to explain the prediction of any given machine learning model. However, posthoc-explainability often is unreliable and has been critized for not providing insight into the underlying model itself \citep{rudin2019stop}. Counterfactual explanations on the other hand explicitly show the change in prediction had the input variables been different \citep{wachter2017counterfactual}, but do not lead to easily interpretable models. Hence, \citet{rudin2019stop} calls on researchers and practitioners to \emph{``Stop explaining black box machine learning models for high stakes decisions and use interpretable models instead''}.

Decision trees are at the heart of several inherently interpretable machine learning models \citep{molnar2020interpretable, breiman2017classification}, and have recently also gained significant interest in the field of clustering \citep{bertsimas2018interpretable, fraiman2013interpretable, ghattas2017clustering}. Through recursive partitioning of the data based on individual features, they provide interpretability on a global scale by identifying the important features, while at the same time also allowing us to retrace the path of individual decisions through the tree. Despite the extensive empirical research on decision trees for clustering, \citet{moshkovitz2020explainable} were the first to introduce a clustering model based on decision trees that also satisfies worst-case approximation guarantees. They propose the Iterative Mistake Minimization (IMM) algorithm which approximates a given $k$-means clustering by a decision tree with $k$ leaves, where each leaf represents a cluster. The quality of the resulting interpretable clustering is measured by the \emph{price of explainability}, defined as the ratio between the cost of the decision tree and the optimal $k$-means cost. IMM returns a decision tree with price of explainability of order $O(k^2)$, implying that the interpretable clusters achieve a cost not much worse than the optimal partition.

Various interpretable clustering solutions for the $k$-means problem with theoretical guarantees have emerged (see Appendix \ref{app:relatedwork} for an overview). However, despite these advancements, a fundamental issue persists: the $k$-means cost is ill-suited for real-world datasets, as it fails to identify clusters that are not linearly separable. Clustering methods deployed in practice tend to be more complex, and consequently even less interpretable than standard $k$-means. The kernel $k$-means algorithm \citep{dhillon2004kernel} that extends the standard $k$-means by implicitly mapping data to a reproducing kernel Hilbert space (RKHS), is particularly notable. It allows to discern clusters beyond Voronoi partitions of the input space, but this increased flexibility further diminishes the model's interpretability.

Works on interpretable kernel methods are scarce. \citet{chau2022rkhs} propose efficient methods for computing Shapley values for kernel regression, which provide post-hoc explanations. \citet{wu2019solving} enforce interpretability in kernel dimension reduction by linearly projecting the data into a low-dimensional space before computing kernel matrices---in spite of operating on fewer features, the kernel step of this approach is not interpretable. As such, the interpretability of kernel methods remains an open problem. We resolve this in the context of clustering by providing decision tree based interpretable approximations of the kernel $k$-means algorithm with provable guarantees.

\begin{figure}
    \includegraphics[width=0.245\textwidth]{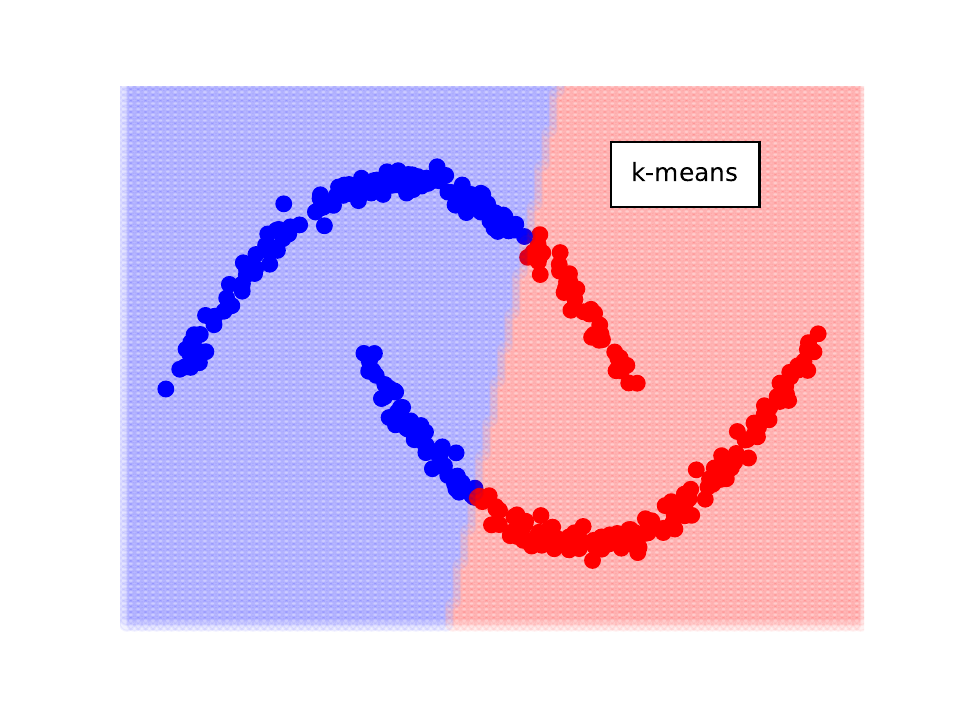}
    \includegraphics[width=0.245\textwidth]{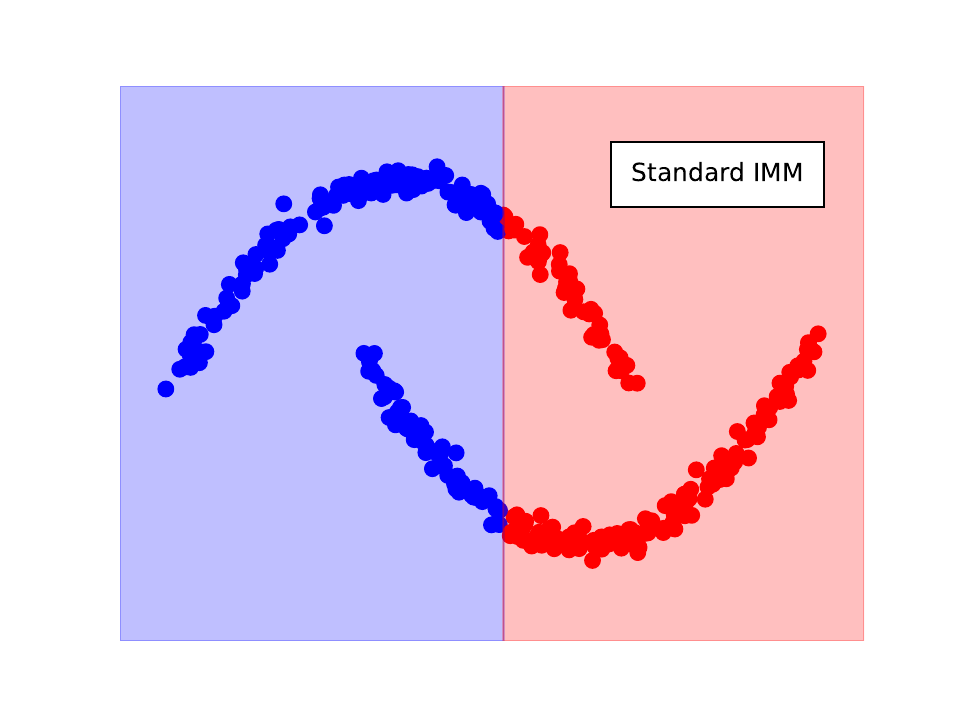}
    \includegraphics[width=0.245\textwidth]{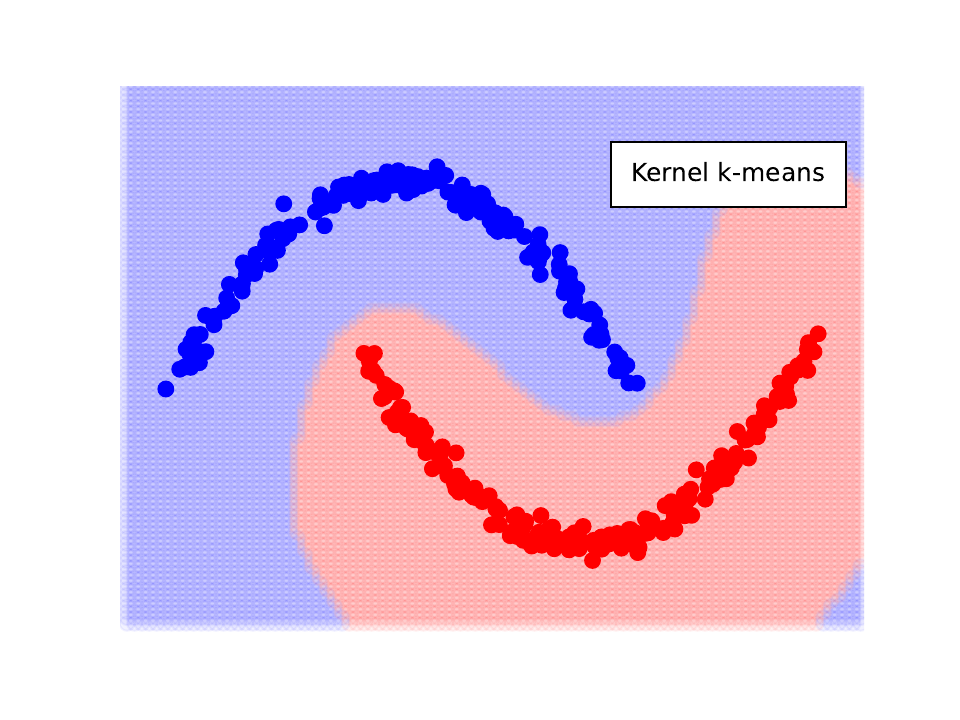}
    \includegraphics[width=0.245\textwidth]{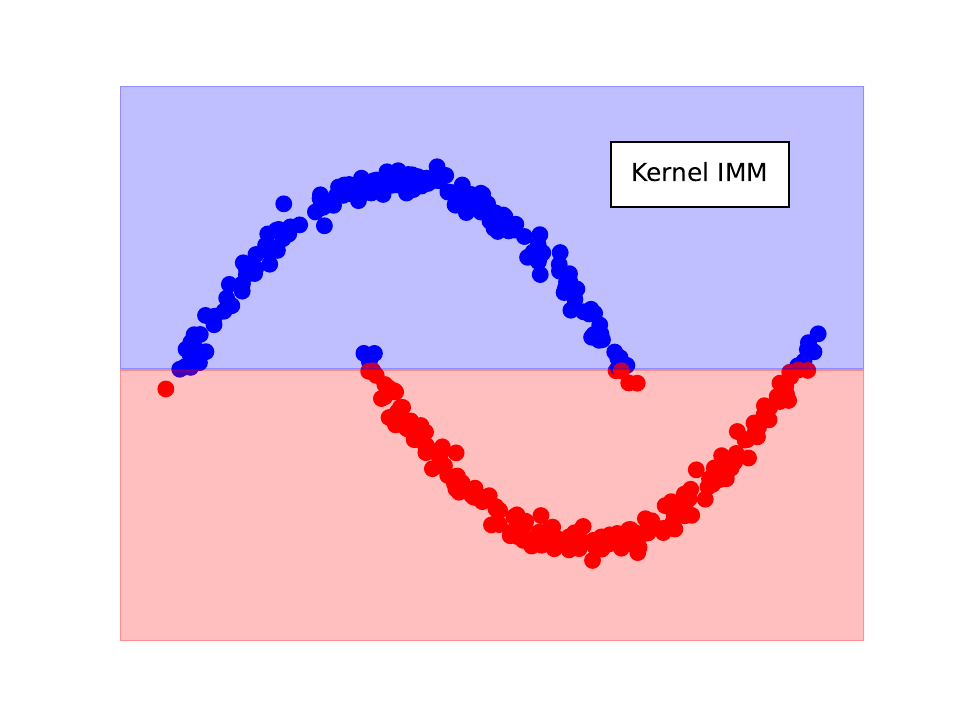}
  \caption{$k$-means does not perform well on halfmoons data that is not linearly separable, and explainable $k$-means naturally inherits its flaws. Kernel $k$-means perfectly finds the clusters and hence, its interpretable variant (proposed Kernel IMM) returns an axis-aligned decision tree with good clustering.}
\end{figure}\label{fig:intro-illustration}

\begin{table}[b]
  \caption{Overview of our algorithms ($x_i$ denotes $i$-th coordinate of point $x\in\mathbb{R}^d$)}
  \label{table:algos}
  \centering
  \begin{tabular}{l@{}l@{}l@{}cl}
    \toprule
    \multicolumn{3}{l}{Algorithm}     & Axis-aligned cuts     & Worst price of explainability \\
    \midrule
    \multirow{3}{*}{\begin{sideways}Kernel\end{sideways}\;} &
    \multirow{3}{*}{\begin{sideways}IMM\end{sideways}\;}
    & for additive kernels     & $x_i \in [\theta_1, \theta_2]$ & $O(k^2)$ ~(Remark \ref{remark:additive}, Appendix \ref{app:additive}) \\
    && for interpretable Taylor kernels & $x_i \in [\theta_1, \theta_2]$  & 
    $O(dk^2)$ 
    ~(Theorem \ref{theo:price}) \\
    && for distance-based product kernels  & $x_i \in [\theta_1, \theta_2]$  & $O(dk^2\cdot C_{\text{data}})$ ~(Theorem \ref{theo:price_distancebased}) \\
    \multicolumn{3}{@{}l}{Kernel ExKMC on empty tree} & $x_i \gtrless \theta \text{ or } x_i \in [\theta_1, \theta_2]$  & Provably unbounded ~(Theorem \ref{theo:exkmc}) \\
    \multicolumn{3}{@{}l}{Kernel Expand on empty tree} & $x_i \gtrless \theta \text{ or } x_i \in [\theta_1, \theta_2]$ &  Provably unbounded ~(Remark \ref{remark:expand}) \\
    \bottomrule
  \end{tabular}
\end{table}

\paragraph{Our contributions.}

 While a significant body of work on decision trees for explaining $k$-means has emerged, these algorithms cannot directly be translated to the kernel setting. In particular, there are cases where explainable $k$-means does not lead to axis-aligned decision trees for \emph{any} choice of features for the popular Gaussian kernel. We prove this in Section \ref{sec:challengekernelimm} by introducing the notions of \emph{interpretable feature maps} and \emph{interpretable decision trees}, which provide a theoretical characterization of the obstacles that interpretability faces in kernel clustering.
 Building on these insights in Section \ref{sec:kernelimm}, we demonstrate how suitably chosen \emph{surrogate features} can resolve this issue, and we derive a kernelized variant of the IMM algorithm \citep{moshkovitz2020explainable} that preserves interpretability. These surrogate features exist for several important product kernels, including the Gaussian and Laplace kernel. Crucially, the proposed algorithm (Kernel IMM) also comes with \emph{worst-case guarantees} on the price of explainability for a class of interpretable Taylor kernels that include the Gaussian kernel. By incorporating information encoded in the nonlinearity of kernel $k$-means, the resulting decision trees lead to significantly better results than explainable $k$-means, as Figure \ref{fig:intro-illustration} illustrates. In Section \ref{sec:greedy}, we further build on previous work \citep{frost2020exkmc} to derive two kernelized algorithms (Kernel ExKMC and Kernel Expand) that can be used to further refine the partitions obtained from Kernel IMM (by adding more leaves to the tree). Both can also be run on an empty tree, but do not admit worst-case bounds in this case. We conclude by demonstrating the empirical performance of the proposed methods in Section \ref{sec:experiments}. The algorithms are included in Table \ref{table:algos}.
 
\section{Background and preliminaries}

Consider a dataset of $n$ points, $X = \left\{x^{(1)}, x^{(2)}, \ldots, x^{(n)}\right\} \subseteq \mathbb{R}^d$. We use $x^{(i)}$ to denote the $i$-th data point in $X$, while the notation $x_i$ denoting the $i$-th coordinate of any $x\in \mathbb{R}^d$ appears more frequently.

\paragraph{$k$-means and kernel $k$-means.}
For a partition of $X$ into $k$ disjoint clusters $C_1, \dots, C_k$ with means $c_1, \dots,c_k \in \mathbb{R}^d$, the $k$-means cost of the partition is defined as 
\begin{equation}\label{eq:kmeans}
cost(C_1,\dots,C_k) = \sum_{l=1}^k \sum_{x \in C_l} \|x-c_l\|^2 \;.   
\end{equation}
The optimal $k$-means cost of $X$ is the minimum achievable cost, over all partitions $C_1,\dots,C_k$. We define $\mathcal{M} = \{c_1,\ldots,c_k\}$ as the set of cluster centers obtained from some algorithm designed to minimize the $k$-means cost \eqref{eq:kmeans}. Since finding the optimal set of centers is NP-hard \citep{dasgupta2008hardness}, in practice $\mathcal{M}$ may not necessarily consist of the $k$ optimal cluster centers. Kernel $k$-means clustering \citep{dhillon2004kernel} replaces the squared Euclidean distance in \eqref{eq:kmeans} by a kernel-based distance. A kernel is a symmetric, positive definite function $K: \mathbb{R}^d \times \mathbb{R}^d \rightarrow \mathbb{R}$. Every kernel is implicitly associated with a feature space $\mathcal{H}$, called the reproducing kernel Hilbert space (RKHS), and a feature map $\psi: \mathbb{R}^d \rightarrow \mathcal{H}$ satisfying $K(x,y) = \langle \psi(x), \psi(y) \rangle$ for all $x,y \in \mathbb{R}^d$. For algorithms that only require knowledge of inner products between points $\psi(x)$ and $\psi(y)$, there is no need to compute the feature map explicitly, but one may instead use the kernel $K(x,y)$ to evaluate $\langle \psi(x), \psi(y) \rangle$. Thus the (implicit) feature map $\psi$ transforms the data non-linearly to a high dimensional space that provides additional flexibility in the model, without increasing the computational complexity. This is commonly known as the kernel trick. Given data $X \subseteq \mathbb{R}^d$ and a kernel $K$ on $\mathbb{R}^d$, kernel $k$-means attempts to find a partitioning $\mathcal{C} = \{C_1, ..., C_k\}$ of $X$ that minimizes
\begin{equation}\label{eq:kernelcost}
\begin{aligned}
cost(C_1, ..., C_k) &= \sum_{l=1}^{k} \sum_{x \in C_l} \|\psi(x) - c_l\|^2
\end{aligned}
\end{equation}
where the centers $c_1, \dots,c_k$ are the means of each cluster in the feature space $\mathcal{H}$. Appendix \ref{app:kkmeans} shows how the cost \eqref{eq:kernelcost} is expressed in terms of kernel $K$ and also includes the kernel $k$-means algorithm.

\paragraph{Explainable $k$-means.}

The goal of explainable $k$-means clustering is to approximate a given partition $C_1,\dots,C_k$ of $X\subseteq \mathbb{R}^d$, with centers $\mathcal{M}$, by an inherently interpretable model.
Axis-aligned decision trees with $k$ leaves have emerged as a natural choice for this purpose, where every leaf of the tree $T$ corresponds to one cluster. Intuitively, one could think that the tree can simply be obtained by using a supervised learning algorithm with the cluster assignments as labels. However, \citet[Section 3]{moshkovitz2020explainable} show that such algorithms can have arbitrarily bad worst-case approximation guarantees on the cost of the clustering induced by the decision tree, which we denote by $cost(T,X)$. This observation has sparked the development of more sophisticated approaches \citep{moshkovitz2020explainable, esfandiari2022almost, charikar2022near, gamlath2021nearly, laber2021price}, all of which construct the tree $T$ in a top-down manner: At every internal node $u$, an axis-aligned cut $x_i \gtrless \theta$ partitions the data at node $u$, while ensuring that every cluster center $c_l \in \mathcal{M}$ ends up in exactly one leaf of $T$. This last requirement is crucial to obtain approximation guarantees. While several works restrict $T$ to have exactly $k$ leaves, few also allow more than $k$ leaves for better approximation \citep{makarychev2022explainable,frost2020exkmc}. Appendix \ref{app:relatedwork} provides a review of works on explainable $k$-means, while Appendix \ref{app:imm} gives a more detailed description of the Iterative Mistake Minimization (IMM) algorithm \citep{moshkovitz2020explainable} that we build on in Section \ref{sec:kernelimm}.

\section{Explainable kernel k-means via cuts on feature maps}\label{sec:challengekernelimm}

The objective of this paper is to present an inherently interpretable model that approximates the kernel $k$-means algorithm. In this section, we make the first steps towards this goal. Standard supervised approaches for learning decision trees are questionable since they, as discussed earlier, can be arbitrarily bad for approximating even the standard $k$-means clustering. On the other hand, the aforementioned iterative decision tree construction admits good worst case guarantees in the context of $k$-means. Hence, a natural starting point would be to \textbf{kernelize} it. Unfortunately however, the method heavily relies on explicitly characterizing the cluster centers. In kernel clustering, these cluster centers $c_1,\ldots,c_k$ reside in the RKHS $\mathcal{H}$ instead of $\mathbb{R}^d$, and typically do not have a pre-image in the input space $\mathbb{R}^d$. \textbf{Thus, operating on centers necessitates explicit computation of the feature maps, which is in contrast to the typical use of the kernel trick}.

A na{\"i}ve kernelization of existing algorithms would hence aim to operate in $\mathcal{H}$ and find axis-aligned cuts using the coordinates of the projection $\psi(x)$. This approach has two fundamental limitations. \emph{For one, it requires an explicit characterization of the RKHS $\mathcal{H}$ or the feature map $\psi$}---the two famously implicit entities in the kernel trick. Explicit characterizations of $\mathcal{H}$ or $\psi$ however are not known for most kernels, and even if they are known, the dimension of $\mathcal{H}$ could be infinite \citep{steinwart2006explicit}, making it practically impossible to iterate over all axes. This problem does not pose a practical concern when clustering a finite set of points. Any kernel matrix $\bm K \in \mathbb{R}^{n \times n}$, computed on data $X = \left\{x^{(1)}, \dots, x^{(n)} \right\}$, can be decomposed as $\bm K = \bm \Phi \bm \Phi^\top$ for some $\bm\Phi = \big[{\phi_1}(x^{(j)}) \ldots {\phi_D}(x^{(j)})\big]_{j=1}^n \in \mathbb{R}^{n\times D}$ through Cholesky or eigendecomposition, providing a data-dependent feature map that we denote as $\phi: X \rightarrow \mathbb{R}^D$.

\emph{The second concern is how to translate axis-aligned cuts $\phi_j(x) \gtrless \theta$ back to axis-aligned cuts in $\mathbb{R}^d$}. Here, $\phi_j$ denotes the $j$-th coordinate of $\phi: X \rightarrow \mathbb{R}^D$. This issue is critically important in the context of interpretability, and can be decomposed into two principle concerns: Firstly, if $\phi_j(x)$ is a function of more than one coordinate $x_i$, then a cut $\phi_j(x) \gtrless \theta$ cannot correspond to an axis-aligned partition of $\mathbb{R}^d$. Secondly, even if $\phi_j(x)$ depends on exactly one coordinate $x_j$, the cut $\phi_j(x) \gtrless \theta$ cannot correspond to a cut $x_j \gtrless \theta'$ unless $\phi_j$ is monotonic.

These conditions are not trivial and, as we show below, the popular Gaussian kernel does not satisfy either of the two conditions, whereas some additive kernels satisfy both. To formally show this, we introduce the notion of interpretable feature maps.

\begin{definition} (\textbf{Interpretable feature maps}) \label{def:map}
   Let $\phi: X \to \mathbb{R}^D$ be a feature map defined on the dataset $X \subseteq \mathbb{R}^d$ such that $K(x,y) = \langle \phi(x), \phi(y)\rangle = \sum_{j=1}^D \phi_j(x)\phi_j(y)$ for all $x,y\in X$.
   We say that the feature map $\phi = (\phi_1,\ldots,\phi_D)$ is interpretable if each $\phi_j(x)$ depends exactly on one coordinate of $x$.
\end{definition}

The following result shows that no feature map for the Gaussian kernel can be interpretable, even if we allow the feature maps to be \textit{data-dependent}.

\begin{theorem}(\textbf{The Gaussian kernel cannot have interpretable feature maps})\label{theo:gaussiankernel1} 
Consider the Gaussian kernel $K(x,y) = e^{-\gamma \Vert x-y\Vert_2^2}$ in $d>1$ dimensions. There exists a dataset $X$ such that for any feature map $\phi : X \rightarrow \mathbb R^D$ satisfying $\langle \phi(x), \phi(y) \rangle = K(x,y)$ for all $x, y \in X$, there exists some $j \in [D]$ such that $\phi_j$ depends on more than just one input dimension of $x \in X$.
\end{theorem}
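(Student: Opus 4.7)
The plan is to first show that the existence of an interpretable feature map forces the Gaussian kernel to decompose additively across coordinates on $X$, and then to exhibit a small concrete dataset $X$ on which no such additive decomposition can hold.

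For the reduction, suppose $\phi = (\phi_1, \ldots, \phi_D)$ is interpretable. I would partition the coordinate indices of $\phi$ into groups $S_1, \ldots, S_d$, where $j \in S_i$ iff $\phi_j$ depends only on input coordinate $i$, so that $\phi_j(x) = g_j(x_i)$ for some $g_j : \mathbb{R} \to \mathbb{R}$. Grouping the inner product by coordinate yields symmetric functions $F_i(a,b) = \sum_{j \in S_i} g_j(a) g_j(b)$, giving the additive identity
\begin{equation*}
K(x,y) \;=\; \sum_{i=1}^d F_i(x_i, y_i) \qquad \text{for all } x, y \in X.
\end{equation*}
Hence, proving the theorem reduces to exhibiting an $X \subseteq \mathbb{R}^d$ on which the Gaussian kernel admits no such additive representation.

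Second, I would take $X = \{p_1, p_2, p_3, p_4\}$ to be the four corners of a unit square embedded in the $(1,2)$-plane, i.e. $p_1 = (0,0,0,\ldots,0)$, $p_2 = (1,0,0,\ldots,0)$, $p_3 = (0,1,0,\ldots,0)$, and $p_4 = (1,1,0,\ldots,0)$. Since the remaining coordinates are constant on $X$, the identity above collapses (after absorbing the constant contributions of $F_3,\ldots,F_d$) to $K(x,y) = F_1(x_1, y_1) + F_2(x_2, y_2)$. The four diagonal equations $K(p_k, p_k) = 1$ immediately force $F_1(0,0) = F_1(1,1)$ and $F_2(0,0) = F_2(1,1)$, which trims the unknowns. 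Then, comparing the ``edge'' pairs ($K = e^{-\gamma}$) with the two ``diagonal'' pairs ($K = e^{-2\gamma}$), the common increment $\delta := F_1(0,1) - F_1(0,0) = F_2(0,1) - F_2(0,0)$ must simultaneously satisfy $\delta = e^{-\gamma} - 1$ and $2\delta = e^{-2\gamma} - 1$, which combine to $(e^{-\gamma} - 1)^2 = 0$. This forces $\gamma = 0$, contradicting that the Gaussian kernel has $\gamma > 0$.

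The main subtlety is ensuring the reduction to additivity genuinely handles every feature map permitted by Definition~\ref{def:map}, including data-dependent maps with potentially very large $D$; but since the grouping argument only uses that each $\phi_j$ is a function of a single coordinate, it goes through without any extra assumption on $\phi$. Beyond that, the counterexample on the unit square is just elementary arithmetic, and for $d > 2$ one simply pads the four points with any fixed values in the remaining coordinates.
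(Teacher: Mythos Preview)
Your proof is correct and follows essentially the same strategy as the paper: reduce interpretability of $\phi$ to an additive decomposition $K(x,y)=\sum_i F_i(x_i,y_i)$ across coordinates, then exhibit a small explicit dataset on which the Gaussian kernel cannot be additive. The only difference is the witness set: the paper uses the three points $(0,0),(1,0),(1,1)$ to obtain the telescoping identity $K_{1,3}=K_{1,2}+K_{2,3}-1$, which is negative for large $\gamma$ (and then rescales), whereas your four-corner square yields $(e^{-\gamma}-1)^2=0$ and hence a contradiction directly for every $\gamma>0$.
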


The short proof is given in Appendix \ref{app:gaussiankernel1}. In Section \ref{sec:kernelimm}, we resolve the inherent non-interpretability of feature maps for the Gaussian kernel by resorting to suitably chosen \emph{surrogate features}. However, this still leaves the concern about monotonicity of $\phi_j$. This too cannot hold for the Gaussian kernel.

\begin{theorem}\label{theo:gaussiankernel2}
(\textbf{The Gaussian kernel does not admit monotonic features)}
Consider the one-dimensional Gaussian kernel $K(x,y) = e^{-\gamma|x-y|^2}$. Then, there exists a dataset $X \subseteq \mathbb{R}$ such that for any feature map $\phi:X\to\mathbb{R}^D$ there exists a component $\phi_j , j \in [D]$ that is not monotonic.
\end{theorem}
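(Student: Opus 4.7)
The plan is to exhibit a three-point dataset $X = \{x_1, x_2, x_3\} \subseteq \mathbb{R}$ with $x_1 < x_2 < x_3$, and derive a contradiction from the assumption that \emph{every} component $\phi_j$ of some feature map is monotonic (either non-decreasing or non-increasing). The crucial observation I would use is that monotonicity imposes a sign constraint on consecutive differences: for any monotonic $\phi_j$, the product $(\phi_j(x_2)-\phi_j(x_1))(\phi_j(x_3)-\phi_j(x_2))$ is non-negative, regardless of the direction of monotonicity. Summing over $j$, this forces the inner product $\langle \phi(x_2) - \phi(x_1), \phi(x_3) - \phi(x_2) \rangle$ to be non-negative. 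The strategy is then to exhibit a dataset on which this inner product is strictly negative, solely as a consequence of the kernel values.

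The second step is to rewrite this inner product purely in kernel terms. Expanding bilinearly and using $K(x_i,x_i) = 1$, one obtains
\begin{equation*}
\langle \phi(x_2) - \phi(x_1), \phi(x_3) - \phi(x_2) \rangle = K(x_2,x_3) + K(x_1,x_2) - K(x_1,x_3) - 1.
\end{equation*}
Thus the whole question reduces to whether we can choose $x_1,x_2,x_3$ so that this quantity is negative, which is independent of the choice of feature map $\phi$ and would immediately yield a contradiction.

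For the final step, I would pick the symmetric configuration $x_1 = -a$, $x_2 = 0$, $x_3 = a$ so that $K(x_1,x_2) = K(x_2,x_3) = e^{-\gamma a^2}$ and $K(x_1,x_3) = e^{-4\gamma a^2}$. The inner product then simplifies to $2 e^{-\gamma a^2} - e^{-4\gamma a^2} - 1$. As $a \to \infty$ both exponentials vanish and the expression tends to $-1$, so for all sufficiently large $a$ it is strictly negative. Taking $X = \{-a,0,a\}$ for such an $a$ completes the contradiction and proves the claim.

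The only subtle point, and the part I would be most careful about, is the handling of the monotonicity assumption: $\phi_j$ may be increasing for some coordinates and decreasing for others, so the sign argument must use the product of consecutive differences (which is non-negative in either case) rather than the individual differences themselves. Apart from this, the proof is just a short computation on three points.
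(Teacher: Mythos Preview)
Your proposal is correct and follows essentially the same route as the paper: both arguments use three points, reduce to the kernel inequality $K(x_1,x_2)+K(x_2,x_3)<K(x_1,x_3)+1$, and verify it for sufficiently spread-out points. Your formulation via the product of consecutive differences is in fact a bit cleaner than the paper's, which separately parametrizes the increasing and decreasing coordinates by non-negative increment vectors before arriving at the identical identity $\epsilon_1^\top\epsilon_2+\delta_1^\top\delta_2 = K_{1,2}+K_{2,3}-K_{1,3}-1$.
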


The proof is included in Appendix \ref{app:gaussiankernel2}. Theorem \ref{theo:gaussiankernel2} suggests that the existing notion of interpretability, specified through threshold cuts $x_i \gtrless \theta$, is too restrictive for deriving interpretable decision trees for kernels. Hence, we propose to expand the definition of interpretability. Although the partitions at each node should still align with the axes of $\mathbb{R}^d$ (this ensures that the important features can be identified), we allow the partition at the node to be any interval and its complement.

\begin{definition} (\textbf{Interpretable decision trees}) \label{def:tree}
   Consider a decision tree $T$ that partitions a dataset $X \subset \mathbb{R}^d$ into $k$ leaves. We call $T$ an interpretable decision tree if at every node $u \in T$, there exists some $i \in [d], \theta_1 < \theta_2 \in \mathbb{R}$ such that the data $X^u$ that arrives at $u$ is split into two disjoint subsets 
    \begin{equation}\label{eq:twosidedcuts}
    \begin{aligned}
        X^u_L = \left\{ x \in X^u \;:\; x_i \in [\theta_1, \theta_2] \right\}  \quad \text{and} \quad
        X^u_R = \left\{ x \in X^u \;:\; x_i \notin [\theta_1, \theta_2] \right\}.
    \end{aligned}
    \end{equation}
\end{definition}

To conclude this section, we remark that several important additive kernels do in fact admit feature maps that give rise to interpretable decision trees.

\begin{remark}(\textbf{Certain additive kernels have interpretable and monotonic features})\label{remark:additive}
Additive kernels on $\mathbb{R}^d$ refer to kernels that can be decomposed as $K(x,y) = \sum_{i=1}^{d} K_i (x, y)$, where each $K_i$ is a kernel on $\mathbb{R}$. Additive kernels are commonly used for comparing histograms, e.g. in computer vision \citep{maji2012efficient, vedaldi2012efficient}. In Appendix \ref{app:additive}, we consider three popular additive kernels and derive feature maps $\phi$ for which any threshold cut $\phi_j(x) \gtrless \theta$ translates to an axis-aligned cut $x_j \in [\theta_1,\theta_2]$. Hence, one may run IMM on the map $\phi$ and directly translate the tree from the feature space to an interpretable decision tree.
\end{remark}

\section{Kernel Iterative Mistake Minimization (Kernel IMM)}
\label{sec:kernelimm}

In this section, we present an interpretable kernel $k$-means algorithm that is applicable to a wide range of bounded product kernels, which include the Gaussian kernel $K(x,y) = e^{-\gamma\Vert x-y\Vert_2^2}$ as well as other distance-based product kernels such as the Laplace kernel $K(x,y) = e^{-\gamma\Vert x-y\Vert_1}$. We focus on the aforementioned idea of constructing interpretable decision trees via axis-aligned cuts of suitably chosen feature maps $\phi$, thereby leveraging existing approximation guarantees. The base interpretable $k$-means method that we ``kernelize'' is the IMM algorithm, primarily due to its simpler analysis. We believe the same construction can also leverage worst-case approximation guarantees for other explainable $k$-means algorithms (with tighter guarantees). The proposed Kernel IMM algorithm is described in Algorithm \ref{alg:kernelimm}, and illustrated in Figure \ref{fig:schema}.

\begin{algorithm}[t]
\caption{Kernel IMM for interpretable Taylor, or distance-based product kernels}
\label{alg:kernelimm}
\begin{algorithmic}
\Require {Data $X$, integer $k$, kernel $K$ with surrogate feature map $\phi$. For interpretable Taylor kernels see Definition \ref{def:taylorkernels}, for distance-based kernels see Equation \eqref{eq:phi_distancebased}.}
\Ensure {Interpretable decision tree $T$ that partitions $X$ into $k$ leaves}
\State Get reference clustering $C_1, \dots, C_k \gets$ \Call{kernel $k$-means}{$X,K,k$}
\State Compute surrogate features $\bm \Phi = \big[\phi_1(x) \ldots \phi_d(x) \big]$ and centers $\mathcal{M} = \{c_1, \dots, c_k\}$ under $\phi$.
\State Construct a decision tree $T \gets$ \Call{imm}{$\bm \Phi, y, \mathcal{M}$}, where $y$ denotes the cluster assignments.
\State Translate threshold cuts of $T$ back to a decision tree on $X$.
\end{algorithmic}
\end{algorithm}

\paragraph{Surrogate features.} Bounded product kernels on $\mathbb{R}^d$ can be expressed as $K(x,y) = \prod_{i=1}^d K_i(x,y)$, where each $K_i$ is a kernel on $\mathbb{R}$, operating on the $i$-th coordinate of data. We assume that $K_i(x,y) \le K_i(x,x) = 1$ for all $x,y \in \mathbb{R}^d$. The RKHS of a product kernel $K$ is given by the tensorization $\mathcal{H} = \mathcal{H}_1 \otimes \ldots \otimes \mathcal{H}_d$, where $\mathcal{H}_i$ is the RKHS of kernel $K_i$. The implicit feature map $\psi: \mathbb{R}^d\to\mathcal{H}$ associated with the product kernel $K$ can be related to the feature maps $\psi_i:\mathbb{R}\to\mathcal{H}_i$ of the $d$ individual kernels $K_i$ via $\langle \psi(x), \psi(y) \rangle = \prod_{i=1}^d \langle \psi_i(x_i), \psi_i(y_i) \rangle$. In order to preserve axis-aligned structures, we propose to use a surrogate feature map $\phi = \left( \phi_1, \dots, \phi_d \right)$, where each $\phi_i$ is a finite-dimensional approximation to $\psi_i$. Notably, $\phi$ does not approximate a map to $\mathcal{H}$ but instead to the Cartesian space $\mathcal{H}_1\times \ldots \times \mathcal{H}_d$, and $\langle \phi(x),\phi(y)\rangle \approx \sum_{i=1}^d \langle\psi_i(x_i),\psi_i(y_i) \rangle \neq \langle \psi(x),\psi(y) \rangle$. Crucially however, we will show later that the additional cost induced by this step can be bounded. By construction, $\phi = (\phi_i)_{i=1}^d$ is an interpretable feature map, and we may compute the set of cluster centers $\mathcal{M}$ in the surrogate feature space associated with $\phi$. As described in \ref{alg:kernelimm}, we let IMM operate on $\mathcal{M}$. Finally, interpretability of $\phi$ ensures that axis-aligned cuts of $\phi$ do in fact correspond to axis-aligned partitions of the input space $\mathbb{R}^d$. For the Gaussian kernel, our surrogate features $\phi_i$ are Taylor approximations to $\psi_i$. We also present an alternative, quite general choice for $\phi_i$ that works \emph{for all distance-based product kernels}, but leads to data-dependent approximation guarantees.

\paragraph{Interpretable Taylor kernels.}
We now define a class of kernels over $\mathbb{R}$ that we refer to as interpretable Taylor kernels, and construct surrogate features for these kernels.

\begin{definition}(\textbf{Interpretable Taylor kernels and their surrogate features})\label{def:taylorkernels}
    Let $K_i$ be a bounded kernel on a connected domain $\mathcal{X}_i \subseteq \mathbb{R}$, of the form $K_i(z,z') = f(z) f(z') g(zz')$ for some differentiable function $f$ and an analytic function $g$. Denote by $g^{(j)}$ the $j$-th derivative of $g$. Assume that $g^{(j)}(0) \ge 0$ for all $j \in \mathbb{N}$ and that $z^{j-1} \left( z f'(z) + j f(z) \right) = 0$ has at most one solution $z$ on $\mathcal{X}_i$. Then, we call $K_i$ an \emph{interpretable Taylor kernel}. Furthermore, assume $K$ is a product of $d$ interpretable Taylor kernels $K_1,\ldots,K_d$, defined on a connected domain $\mathcal{X} \subset \mathbb{R}^d$. Given an integer $M$, we define the \emph{surrogate feature map} for $K$ as $\phi(x) = \left(\phi_i(x_i) \right)_{i=1}^d$, where each $\phi_i:\mathbb{R}\to\mathbb{R}^{M+1}$ is given by $\displaystyle \phi_i(z) = (\phi_{i,j}(z))_{j=0}^M$ where we define
    $\displaystyle \phi_{i,j}(z) = z^j f(z) \sqrt{{g^{(j)}(0)}/{j!}}$ for all $j \le M$.
\end{definition}

The Gaussian kernel with bandwidth $\gamma > 0$ is a product of interpretable Taylor kernels, because with $g(z) = \exp(2z)$ and $f(z) = \exp(-\gamma z^2)$, $z f'(z) + j f(z) = 0$ has at most one solution on $(0,\infty)$. This is sufficient because we can always assume data $X$ that is clustered using the Gaussian kernel to be contained in $(0,\infty)^d$, as shifting does not affect the clustering cost.

\begin{theorem}\label{theo:taylor}(\textbf{Threshold cuts in the surrogate feature space of interpretable Taylor kernels yield interpretable decision trees})
    Let $K$ be product of interpretable Taylor kernels, and let $\phi$ denote its surrogate feature map. Then, a threshold cut of the form $\phi_{i,j}(x_i) \gtrless \theta$ leads to an interpretable decision tree in  $\mathbb{R}^d$.
\end{theorem}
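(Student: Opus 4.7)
The plan is to reduce a threshold cut $\phi_{i,j}(x_i) \gtrless \theta$ to a condition on the single coordinate $x_i$ alone, and then use the derivative assumption in Definition~\ref{def:taylorkernels} to show that the resulting superlevel/sublevel set is either an interval in $\mathcal{X}_i$ or the complement of such an interval. Both cases match the form required by Definition~\ref{def:tree}.

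First, I would absorb the constant factor: since $g^{(j)}(0)\ge 0$, the nonnegative constant $\sqrt{g^{(j)}(0)/j!}$ can be folded into $\theta$ (if it is zero, $\phi_{i,j}\equiv 0$ and the cut is trivial). So the cut becomes $h(x_i) \gtrless \theta'$ for $h(z) := z^j f(z)$, a function of the single coordinate $x_i$. This already gives axis-alignment, so what remains is to understand the level sets of $h$ on the connected domain $\mathcal{X}_i\subseteq\mathbb{R}$.

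Second, I would analyze the monotonic structure of $h$ via its derivative. A direct computation gives
\begin{equation*}
h'(z) \;=\; j z^{j-1} f(z) + z^j f'(z) \;=\; z^{j-1}\bigl(z f'(z) + j f(z)\bigr),
\end{equation*}
which is exactly the expression assumed in Definition~\ref{def:taylorkernels} to have at most one zero in $\mathcal{X}_i$. Since $h$ is continuous on the connected set $\mathcal{X}_i$ with at most one critical point, $h$ is either monotonic on $\mathcal{X}_i$, or unimodal with a single maximum or minimum and strict monotonicity on each side.

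Third, I would conclude the interval structure of the cut. If $h$ is monotonic or has a single maximum, then the superlevel set $\{z\in\mathcal{X}_i: h(z)\ge \theta'\}$ is a connected interval $[\theta_1,\theta_2]\cap\mathcal{X}_i$, and its complement in $\mathcal{X}_i$ has the form required by \eqref{eq:twosidedcuts}; if $h$ has a single minimum, the roles of superlevel and sublevel set swap, but the split $\{x_i\in[\theta_1,\theta_2]\}$ versus $\{x_i\notin[\theta_1,\theta_2]\}$ still realizes the two sides of the cut (possibly with $\theta_1=-\infty$ or $\theta_2=+\infty$, which on a finite dataset can be replaced by finite thresholds outside the data range). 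Translating the threshold cut in feature space back to $\mathbb{R}^d$ then yields precisely a split of the form in Definition~\ref{def:tree}, so the resulting tree is interpretable.

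The only nontrivial step is the second one, where the derivative identity must be matched exactly to the hypothesis of Definition~\ref{def:taylorkernels}; once that monotonicity/unimodality dichotomy is established, the rest is bookkeeping about preimages of intervals under a unimodal function.
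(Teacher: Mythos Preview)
Your proposal is correct and follows essentially the same approach as the paper: compute the derivative of $\phi_{i,j}$, match it to the expression $z^{j-1}(zf'(z)+jf(z))$ assumed to have at most one zero on $\mathcal{X}_i$, and conclude that the resulting monotonic-or-unimodal structure makes threshold cuts translate to axis-aligned interval conditions. Your write-up is in fact more explicit than the paper's (which compresses the unimodality-to-interval step into a single sentence), but the argument is the same.
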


\begin{figure}[t]
    \centering
\includegraphics[width=1.0\textwidth]{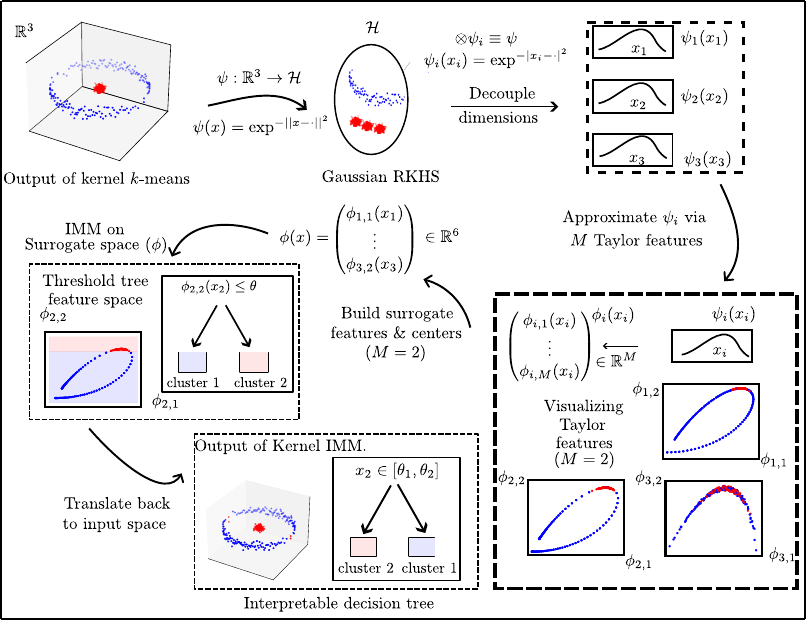}
    \caption{A schematic of Kernel IMM for Interpretable Taylor kernels.}
    \label{fig:schema}
\end{figure}

Our choice of surrogate features doesn't only provide interpretable decision trees, but also preserves worst-case bounds on the cost. To formalize this, we generalize the price of explainability to kernels.

\begin{definition}
\label{def:kernelprice}(\textbf{The price of explainability)}
For a set $X \subset \mathbb{R}^d$ and an interpretable decision tree $T$ as defined in \ref{def:tree}, the price of explainability of $T$ on $X$ is defined as the ratio
    $p(T,X) = {cost(T,X)}/{cost_{opt}(X)}$,
where $cost_{opt}(X)$ is the optimal kernel $k$-means cost.
The price of explainability of the data set $X$ is given by
$p(X) = \min\limits_{T} p(T,X)$,
where the minimum is taken over all interpretable decision trees.
\end{definition}

The Kernel IMM algorithm admits worst-case guarantees quite similar to the ones obtained in the linear setting, albeit with an additional factor of order $d$. This is a result of operating on the surrogate features that decouple the $d$ input dimensions (see Appendix \ref{app:decouple}).

\begin{theorem}\label{theo:price}(\textbf{Price of explainability for interpretable Taylor kernels)}
    Let $C_1, \dots, C_k$ be the clusters of a dataset $X \subset \mathbb{R}^d$ derived from kernel $k$-means, where the kernel $K$ is an interpretable Taylor kernel. Denote by $\bm \Phi = \phi(X)$ the surrogate features for $K$ on $X$. Then, the interpretable decision tree obtained from Kernel IMM on $\bm \Phi$ satisfies $p(T,X) = O\left(dk^2 \right) + \frac{O(dk^2 \delta)}{cost_{opt}(X)}$, where $\delta= \delta(M)$ depends on the order $M$ of the surrogate feature map and $\lim_{M \to \infty} \delta(M) = 0$ for any dataset $X$. Thus, by adaptively choosing $M$ such that $\delta = O(cost_{opt})$, we obtain $O(dk^2)$ bounds.
\end{theorem}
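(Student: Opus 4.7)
The plan is to combine three ingredients: (i) the dimension-free $O(k^2)$ IMM approximation guarantee \citep{moshkovitz2020explainable} applied verbatim in the Euclidean surrogate space, (ii) a two-sided comparison between the RKHS cost and an ``additive'' surrogate cost that costs a factor of $d$, and (iii) a Taylor-remainder estimate controlling the deviation of the truncated surrogate from the exact additive kernel.

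First I would verify that the tree returned by Kernel IMM is a valid interpretable tree on $X$. Since $\phi=(\phi_1,\ldots,\phi_d):\mathbb{R}^d\to\mathbb{R}^{d(M+1)}$ is interpretable by construction (Definition~\ref{def:taylorkernels}), Theorem~\ref{theo:taylor} guarantees that every threshold cut $\phi_{i,j}(x_i)\gtrless\theta$ that IMM makes in surrogate space translates into an axis-aligned interval cut on coordinate $x_i$. Applying the IMM analysis in the $d(M+1)$-dimensional surrogate space then yields $cost_\phi(T,X)\le O(k^2)\cdot cost_\phi(\mathcal{M}_\phi,X)$, where $cost_\phi$ is the ordinary $k$-means cost on $\phi(X)$ and $\mathcal{M}_\phi$ the reference centers in surrogate space. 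Since the IMM bound is dimension-free in the number of features, this step incurs no dependence on $d$ or $M$.

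The core step is to pass between $cost_\phi$ and the RKHS cost $cost_\psi$. Using $K_i(x_i,x_i)=1$ together with the elementary inequality $\tfrac{1}{d}\sum_i(1-K_i)\le 1-\prod_i K_i\le \sum_i(1-K_i)$ valid for $K_i\in[0,1]$, and combining it with the identities $cost(C_l)=\tfrac{1}{2|C_l|}\sum_{x,y\in C_l}\|\psi(x)-\psi(y)\|^2$ and $\|\psi(x)-\psi(y)\|^2=2(1-K(x,y))$, I would derive $cost_\psi(\mathcal{C},X)\le \widetilde{cost}_\phi(\mathcal{C},X)\le d\cdot cost_\psi(\mathcal{C},X)$ for every clustering $\mathcal{C}$, where $\widetilde{cost}_\phi$ denotes the exact additive surrogate cost obtained formally with $M=\infty$. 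The truncation error satisfies $0\le \langle \psi_i(x_i),\psi_i(y_i)\rangle-\langle \phi_i(x_i),\phi_i(y_i)\rangle = f(x_i)f(y_i)\sum_{j>M}\tfrac{g^{(j)}(0)}{j!}(x_iy_i)^j$, a non-negative analytic tail that is uniformly small on the bounded set $X$; summing this pointwise bound over coordinates and cluster pairs gives $|\widetilde{cost}_\phi(\mathcal{C},X)-cost_\phi(\mathcal{C},X)|\le d\,\delta(M)$ with $\delta(M)\to 0$ as $M\to\infty$, uniformly in $\mathcal{C}$.

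Chaining the three bounds gives $cost_\psi(T,X)\le \widetilde{cost}_\phi(T,X)\le cost_\phi(T,X)+d\delta \le O(k^2)\bigl[cost_\phi(\mathcal{M}_\phi,X)+d\delta\bigr] \le O(k^2)\bigl[d\cdot cost_{opt}(X)+d\delta\bigr] = O(dk^2)\,cost_{opt}(X)+O(dk^2\,\delta(M))$, where the penultimate step uses $\widetilde{cost}_\phi(\mathcal{M}_\phi,X)\le d\cdot cost_{opt}(X)$ for the (assumed near-optimal) reference clustering. Dividing by $cost_{opt}(X)$ produces the claimed bound, and the adaptive choice of $M$ with $\delta(M)=O(cost_{opt}(X))$ collapses the second term into the first. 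The main obstacle I expect is the Taylor-remainder bookkeeping: one must carefully track how the analytic tail of $g$ behaves under the product $f(x_i)f(y_i)(x_iy_i)^j$ on the compact range relevant for $X$, and confirm that the per-coordinate remainder $\delta(M)$ goes to zero independently of $|X|$ so that a finite, data-dependent $M$ suffices. Everything else reduces to a dimension-free transfer of the known IMM analysis and a simple convexity-style inequality between product and sum kernels.
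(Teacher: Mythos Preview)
Your proposal is correct and follows essentially the same route as the paper: the decoupling inequality $\tfrac{1}{d}\sum_i(1-K_i)\le 1-\prod_i K_i\le\sum_i(1-K_i)$ is exactly Lemma~\ref{lemma:decouple}, the dimension-free IMM bound in surrogate space is the same step the paper invokes, and the Taylor-remainder control of $\delta(M)$ matches the paper's estimate. The only cosmetic differences are that the paper writes the IMM step per coordinate (via the first part of Lemma~\ref{lemma:immlemma}) rather than globally on $\mathbb{R}^{d(M+1)}$, and uses the opposite tilde convention (tilde for the truncated rather than the exact additive cost); neither changes the argument.
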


The proof can be found in Appendix \ref{app:taylor}. For interpretable Taylor kernels, Kernel IMM operates in a feature space of dimension $O(dM)$. In practice, often even low orders $M \le 5$ of the Taylor-based surrogate features provide enough flexibility to obtain good interpretable clusterings, and hence \emph{Kernel IMM computationally is not much more expensive than standard IMM}.

\paragraph{Extension to all distance-based product kernels.}

To conclude this section, we present an extension of Kernel IMM to the entire class of distance-based product kernels $K(x,y) = \prod_{i=1}^d h\left( |x_i - y_i| \right)$, where $h$ is a decreasing function on the positive real line, with $h(0)=1$. Examples include the Laplace kernel, for which $h(t) = \exp(-t)$. For these kernels, we simply set 
\begin{equation}\label{eq:phi_distancebased}
    \phi_{i,j}(x) = K_i(x, x^{(j)}) = h(|x_i - x_i^{(j)}|)
\end{equation} 
where $x^{(j)}$ denotes the $j$th point in the dataset $X$. Because $\phi_{i,j}(x) \le \theta$ if and only if $| x_i - x^{(j)}_i | \ge h^{-1}(\theta)$, this leads to an interpretable decision tree. However, the worst-case bounds obtained from IMM do not translate directly.

\begin{theorem}\label{theo:price_distancebased}(\textbf{Price of explainability for distance-based product kernels})
    Let $C_1, \dots, C_k$ be the clusters of a dataset $X \subset \mathbb{R}^d$ derived from kernel $k$-means, where the kernel $K$ is a distance-based product kernel. Denote by $\bm \Phi = \phi(X)$ the surrogate features for $K$ on $X$, where $\phi_{i,j}(x) = K_i(x, x^{(j)})$. Then, the interpretable decision tree obtained from Kernel IMM on $\bm \Phi$ satisfies $p(T,X) = O\left(C dk^2 \right)$, where $C$ depends on the dataset $X$ and the kernel $K$.
\end{theorem}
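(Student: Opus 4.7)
The plan is to mirror the proof strategy of Theorem \ref{theo:price}, now using the data-dependent surrogate features $\phi_{i,j}(x) = h(|x_i - x_i^{(j)}|)$. Because each $\phi_{i,j}$ depends only on the $i$-th input coordinate and $h$ is strictly decreasing, any threshold cut $\phi_{i,j}(x) \gtrless \theta$ translates directly into an interval cut $|x_i - x_i^{(j)}| \gtrless h^{-1}(\theta)$ on coordinate $i$, matching Definition \ref{def:tree}. Running IMM on $\bm\Phi = \phi(X)$ with the cluster assignments from kernel $k$-means thus yields an interpretable decision tree $T$, and the standard IMM guarantee gives $cost_\phi(T, X) \le O(k^2)\cdot cost_\phi(C_1,\ldots,C_k)$, where $cost_\phi$ denotes the $k$-means cost in the surrogate space.

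The crux of the proof is to translate these surrogate-space cost bounds into kernel $k$-means cost bounds. Using the pairwise-distance representation of the $k$-means cost together with the elementary inequalities
\[
1 - \prod_{i=1}^d h_i \;\le\; \sum_{i=1}^d (1 - h_i) \;\le\; d\cdot\bigl(1 - \prod_{i=1}^d h_i\bigr),\qquad h_i \in [0,1],
\]
I would decouple the kernel cost across coordinates to obtain $cost_\psi(\mathcal{C}) \le \sum_{i=1}^d cost_{\psi_i}(\mathcal{C}) \le d \cdot cost_\psi(\mathcal{C})$, where $\psi_i$ is the feature map of the one-dimensional kernel $K_i(x_i,y_i) = h(|x_i-y_i|)$. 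By the definition of $\phi$, the surrogate cost decomposes exactly as $cost_\phi(\mathcal{C}) = \sum_{i=1}^d cost_{\phi_i}(\mathcal{C})$, with $\phi_i(x) = (h(|x_i-x_i^{(j)}|))_{j=1}^n$. This decoupling is what produces the factor of $d$ in the final bound, paralleling the argument of Appendix \ref{app:decouple}.

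The main obstacle, and the source of the data-dependent constant $C$, is comparing the per-coordinate surrogate and RKHS distances $\|\phi_i(x)-\phi_i(y)\|^2 = \sum_j(h(|x_i-x_i^{(j)}|) - h(|y_i-x_i^{(j)}|))^2$ and $\|\psi_i(x_i)-\psi_i(y_i)\|^2 = 2(1 - h(|x_i - y_i|))$. The direction $\|\psi_i(x_i)-\psi_i(y_i)\|^2 \le A\|\phi_i(x)-\phi_i(y)\|^2$ follows by observing that the vectors $\phi_i(x)$ and $\phi_i(y)$ contain the entries $h(0)=1$ and $h(|x_i - y_i|)$ in the positions corresponding to the points $x$ and $y$ themselves, yielding $\|\phi_i(x)-\phi_i(y)\|^2 \ge 2(1 - h(|x_i-y_i|))^2$ and hence an $A$ controlled by the minimum within-cluster gap $\min_{x\ne y \in C_l}(1 - h(|x_i - y_i|))$. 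The reverse direction $\|\phi_i(x)-\phi_i(y)\|^2 \le B\|\psi_i(x_i)-\psi_i(y_i)\|^2$ follows from Lipschitz-continuity of $h$ on the bounded coordinate range combined with a lower bound of $1 - h(t)$ in terms of $t$. Chaining everything, $cost_\psi(T,X) \le \sum_i cost_{\psi_i}(T,X) \le A\cdot cost_\phi(T,X) \le O(k^2)A\cdot cost_\phi(\mathcal{C}) \le O(k^2)AB\cdot\sum_i cost_{\psi_i}(\mathcal{C}) \le O(dk^2 C)\cdot cost_\psi(\mathcal{C})$, and dividing through by $cost_{opt}(X)$ (absorbing the kernel $k$-means approximation factor into $C$) delivers the claimed $O(Cdk^2)$ price of explainability, with $C$ capturing the kernel, the minimum within-cluster gap, and the data diameter.
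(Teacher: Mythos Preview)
Your approach is correct and reaches the same $O(Cdk^2)$ bound, but the route differs from the paper's in where the data-dependent constant enters. You treat IMM as a black box in the Euclidean surrogate space, obtain the clean $O(k^2)$ guarantee there, and then sandwich each per-coordinate Euclidean cost $cost_{\phi_i}$ between multiples of the per-coordinate kernel cost $cost_{\psi_i}$ via the pairwise-distance comparisons $A$ and $B$. The paper instead equips the surrogate space with the weighted inner product $\langle u,v\rangle_{\mathcal V}=\sum_i u_i^\top \bm K_i^{-1} v_i$, under which $\phi_i$ \emph{exactly} reproduces $K_i$; the data-dependent constants $\kappa_1,\kappa_2$ then appear \emph{inside} the IMM analysis, because summing the coordinate-wise bound of Lemma~\ref{lemma:immlemma} (part~1) no longer yields the $\mathcal V$-norm bound (part~2), and one must pay for converting between the Euclidean and $\mathcal V$-norms on center differences and point-to-center differences respectively. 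Your argument is more modular (IMM is untouched) and avoids introducing $\bm K_i^{-1}$, while the paper's constants have a cleaner spectral interpretation; on the other hand your $B$ naturally picks up a factor of $n$ from the $n$ summands in $\|\phi_i(x)-\phi_i(y)\|^2$, which the paper's route does not make explicit. One small correction: your constant $A$ must bound the ratio over all pairs within the \emph{tree's} leaves, not the reference clusters $C_l$; since the tree is not known a priori, the safe choice is to take the maximum over all pairs $x\ne y$ in $X$ with $x_i\ne y_i$, which is still finite and data-dependent.
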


We discuss this in Appendix \ref{app:kernelfeatures}. Despite the data-dependent approximation guarantee, we observe in practice that the surrogate features defined in \eqref{eq:phi_distancebased} perform well, sometimes even outperforming the surrogate Taylor features of the Gaussian kernel.

\section{Greedy Cost Minimization}
\label{sec:greedy}

While the previous two sections cover several important kernels, there may nonetheless be cases where Kernel IMM is not applicable (because no suitable surrogate features exist), or not powerful enough to approximate kernel $k$-means (due to its restriction to just $k$ leaves, as we illustrate in Figure \ref{fig:pathbased_new}). To address these issues, we propose kernelized variants of the ExKMC algorithm \citep{frost2020exkmc}, which greedily add new leaves. This does not require computations in the feature space: Any set of reference clusters $C_1, \dots, C_k$ obtained from kernel k-means \textit{implicitly} comes with a set of reference centers $\mathcal{M} = \{c_1,\dots,c_k\} \subset \mathcal{H}$. The kernel trick ensures that the distance between the feature map of any point $\psi(x) \in \mathcal{H}$ and a cluster center $c_l$ can be evaluated without explicitly computing the center as
\begin{align}
    \label{eq:sqdist}
    \|\psi(x) - c_l \|^2 = K(x,x) + \frac{1}{|C_l|^2} \sum_{y,z \in C_l} K(y,z) - \frac{2}{|C_l|} \sum_{y \in C_l} K(x,y)
\end{align}

\paragraph{Kernel ExKMC (Algorithm \ref{alg:kernelexkmc}).} The algorithm proceeds sequentially. For every node $u$ of the decision tree, let $X^u$ denote the set of points that reach the node $u$. A threshold cut $(i, \theta)$ is chosen, separating $X^u$ into two subsets $X^u_L = X^u \cap \{x_i \le \theta \}$ and $X^u_R = X^u \cap \{x_i > \theta \}$. The Kernel ExKMC algorithm chooses $(i,\theta)$ as the minimizer of the cost function
\begin{equation}\label{eq:exkmc}
    cost_{exkmc}(u,i,\theta) = \min_{j,l \in [k]} \sum_{x \in X^u_L} \|\psi(x) - c_j\|^2 + \sum_{x \in X^u_R} \|\psi(x) - c_l\|^2 
\end{equation}
where the squared distances can be computed using \eqref{eq:sqdist}.
One reference center is chosen for each of the two child nodes of $u$. To decide on which node to split, we choose the one that maximizes the difference between the cost of not splitting, given by $\displaystyle\min_{j \in [k]} \sum_{x \in X^u} \|\psi(x) - c_j\|^2$, and $cost_{exkmc}(i,\theta)$. Since all relevant computations happen in the input space, and no explicit feature map $\psi$ is needed, the algorithm is applicable to \textit{arbitrary} kernels and can be used to refine any given partition $\hat C_1, \dots, \hat C_p$. Unfortunately, we can prove that Kernel ExKMC on an empty tree does not admit approximation guarantees when limited to $k$ leaves.

\begin{theorem}[\textbf{ExKMC does not admit no worst-case bounds}]
\label{theo:exkmc}
For any $m \in \mathbb{N}$ there exists a data set $X \subset \mathbb{R}$ such that the price of explainability on $X $ is $p(X) = 1$, IMM attain this minimum, but ExKMC (initialized on an empty tree) constructs a decision tree $T$ with $p(T,X) \ge m$.
\end{theorem}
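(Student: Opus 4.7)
}

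The plan is to construct, for each target $m\in\mathbb{N}$, a one-dimensional dataset $X=X_m$ (with a suitably chosen kernel) on which IMM attains $p(X)=1$ while Kernel ExKMC, run on the empty tree, is forced by its greedy objective \eqref{eq:exkmc} to produce a tree whose induced partition merges two kernel $k$-means clusters into a single leaf, contributing a fixed amount of cost independent of $m$, while $cost_{opt}(X)$ can be driven to zero at a rate controlled by a parameter $\delta\to 0$. I would use $k=2$ or $k=3$ and keep the dataset fixed in size (e.g.\ $X=\{x^{(1)},\ldots,x^{(n)}\}\subset\mathbb{R}$ with $n$ a small constant), so that the price can be blown up purely by tuning the kernel or the perturbation parameter against $m$.

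First I would set up the construction. The basic ingredient is a kernel whose feature map $\psi$ places two input-space-distant points close together in $\mathcal{H}$ while placing an input-space point that lies \emph{between} them far away (this is exactly the situation Theorems \ref{theo:gaussiankernel1} and \ref{theo:gaussiankernel2} suggest is natural in the kernel world). Concretely, I would pick $X=\{a,b,c\}\subset\mathbb{R}$ with $a<b<c$ and a kernel such that $\|\psi(a)-\psi(c)\|^2=O(\delta^2)$ while $\|\psi(a)-\psi(b)\|^2$ and $\|\psi(b)-\psi(c)\|^2$ are both $\Theta(1)$; the two-sided interval cut $b\in[\theta_1,\theta_2]\not\ni a,c$ admitted by Definition \ref{def:tree} then isolates $\{b\}$ from $\{a,c\}$ and induces exactly the kernel $k$-means partition, giving $cost(T,X)=cost_{opt}(X)=O(\delta^2)$ and hence $p(X)=1$. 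IMM attains this because the IMM cut on the surrogate features $\phi$ makes zero mistakes with respect to the reference centres (it is a one-leaf-per-centre placement), so the mistake-counting bound of \citet{moshkovitz2020explainable} reduces to equality.

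Second, I would analyze Kernel ExKMC on the empty tree. With only $k$ leaves available and threshold cuts $x_i\gtrless\theta$, the root has exactly two nontrivial choices in $\mathbb{R}^1$ (a cut between $a$ and $b$, or between $b$ and $c$); each necessarily places $b$ on the same side as one of $\{a,c\}$. Using the kernel-trick identity \eqref{eq:sqdist} I would compute the proxy cost \eqref{eq:exkmc} for each candidate cut and show that the minimizer achieves proxy cost of order $1$ (dominated by $\|\psi(b)-c_l\|^2$ on whichever side $b$ ends up in), which forces a leaf containing one of the pairs $\{a,b\}$ or $\{b,c\}$ whose true within-leaf cost in feature space is a \emph{constant} $\Theta(1)$ independent of $\delta$. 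Even if ExKMC is allowed the interval-cut form as well, I would engineer the dataset (by adding enough filler points, or by passing to a slightly larger $k$) so that the proxy \eqref{eq:exkmc} strictly prefers a threshold-type first cut; this is the step where a careful choice of the kernel values is essential. The resulting tree has $p(T,X)\geq \Theta(1)/O(\delta^2)\geq m$ once $\delta$ is chosen small enough.

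The main obstacle I anticipate is ruling out interval cuts for ExKMC, because with interval cuts ExKMC could simply mimic the IMM partition and the lower bound collapses. I would handle this either by appealing to the algorithmic description preceding \eqref{eq:exkmc} (where only threshold cuts $(i,\theta)$ are considered), or by enlarging the construction so that every candidate interval cut containing the ``middle'' point also contains a disproportionate mass of reference points, pushing the proxy \eqref{eq:exkmc} of interval cuts above that of the bad threshold cut. A secondary technical step is verifying that the greedy tie-breaking and subsequent greedy splits cannot recover: since Kernel ExKMC with $k$ leaves stops as soon as the tree has $k$ leaves and never re-examines a parent cut, once the first cut has grouped two reference centres on one side, at least one leaf of the final tree inherits that mixing, making the $p(T,X)\geq m$ lower bound robust against the remaining greedy choices.
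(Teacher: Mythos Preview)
Your proposal has a genuine gap, and it also diverges from the much simpler setting the paper actually uses.

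The paper proves the theorem entirely in the \emph{linear} $k$-means setting (identity kernel) with $k=3$. The dataset consists of three clusters of equal size centred at $-1,0,1$, each cluster consisting of points at distance exactly $\epsilon$ from its centre. The crux is a direct comparison of the proxy cost \eqref{eq:exkmc} at the root: a clean cut at $\theta=0.5$ (separating $C_3$ from $C_1\cup C_2$) forces the left side to use a single reference centre for two clusters and incurs cost $\tfrac{2n}{3}\epsilon^2+\tfrac{n}{6}(1-\epsilon)^2+\tfrac{n}{6}(1+\epsilon)^2$, whereas the symmetric cut at $\theta=0$ (splitting $C_2$ in half) incurs the strictly smaller cost $\tfrac{2n}{3}\epsilon^2+\tfrac{n}{3}(1-\epsilon)^2$. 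ExKMC therefore greedily cuts through the middle cluster; the single remaining cut can repair only one side, leaving $\Theta(n)$ cost in the other leaf while $cost_{opt}=n\epsilon^2$. Sending $\epsilon\to 0$ gives $p(T,X)\geq m$. IMM, by contrast, makes zero mistakes with threshold cuts alone, so $p(X)=1$. No kernel nonlinearity or interval cuts are needed.

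Your construction, by contrast, hinges on an asymmetry in the admissible cuts: you want IMM to use the two-sided interval cut of Definition~\ref{def:tree} to isolate $\{b\}$ and achieve $p(X)=1$, while restricting ExKMC to one-sided threshold cuts so that it cannot. You correctly flag this as the ``main obstacle,'' but neither of your proposed fixes works. Appealing to the algorithmic description before \eqref{eq:exkmc} does not help, because the theorem compares IMM and ExKMC on the same footing; if ExKMC on an empty tree is restricted to threshold cuts, so is IMM, and then IMM cannot separate $\{a,c\}$ from $\{b\}$ either---your claim that IMM attains $p(X)=1$ collapses. Conversely, if both are granted interval cuts, ExKMC's greedy objective at the root will simply pick the interval isolating $b$ (it has the lowest proxy cost by construction), and the lower bound vanishes. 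With $k=2$ and three points there is only one cut in total, so the two algorithms are in fact indistinguishable on your instance. Your fallback of ``adding filler points or passing to larger $k$'' is where the real work would lie, and it is left entirely unspecified; the mechanism that makes ExKMC prefer a bad cut over the good one is precisely what the proof must exhibit, and your proposal never does.
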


The proof can be found in Appendix \ref{app:exkmc}. If Kernel ExKMC is run as a refinement of Kernel IMM (which constructs an interpretable decision tree in the sense of Definition \ref{def:tree} with two-sided threshold cuts) then we suggest sticking to this expanded notion of interpretability, and Kernel ExKMC at every node chooses $(i, \theta_1, \theta_2) \in [d] \times \mathbb{R}^2$ and checks whether $x_i \in [\theta_1, \theta_2]$ or not.

\paragraph{Kernel Expand (Algorithm \ref{alg:kernelexpand}).} We suggest another algorithm, Kernel Expand, that also adds new leaves to an existing tree, but maximizes the purity of each leaf. This is achieved by minimizing
\begin{equation}\label{eq:expand}
    cost_{exp}(i,\theta) = \min_{j,l \in [k]} \left( |\{x \in X^u_L \;:\; c(x) \neq c_j\}| + |\{x \in X^u_R \;:\; c(x) \neq c_l\}| \right)
\end{equation}
where $c(x)$ denotes the cluster center of $x$ according to the reference partition $\mathcal{M}$. To decide on which node to split, we again choose the one that maximizes the difference between the cost of not splitting, given by $\displaystyle \min_{j \in [k]} |\{x \in X^u \;:\; c(x) \neq c_j\}|$, and $cost_{expand}(i,\theta)$.

\begin{remark}(\textbf{Kernel Expand does not admit worst-case bounds})\label{remark:expand}
In the same way as Kernel ExKMC, Kernel Expand does not admit worst-case bounds when initialized without Kernel IMM. This is obvious from the example in Section 3 of \citep{moshkovitz2020explainable}.
\end{remark}

Finally, it is important to note that while adding more leaves improves the clustering, it gradually reduces the interpretability of the tree. Thus, there is a trade-off between explainability and accuracy in Kernel ExKMC and Kernel Expand.

\begin{figure}[t]
    \includegraphics[width=0.19\textwidth]{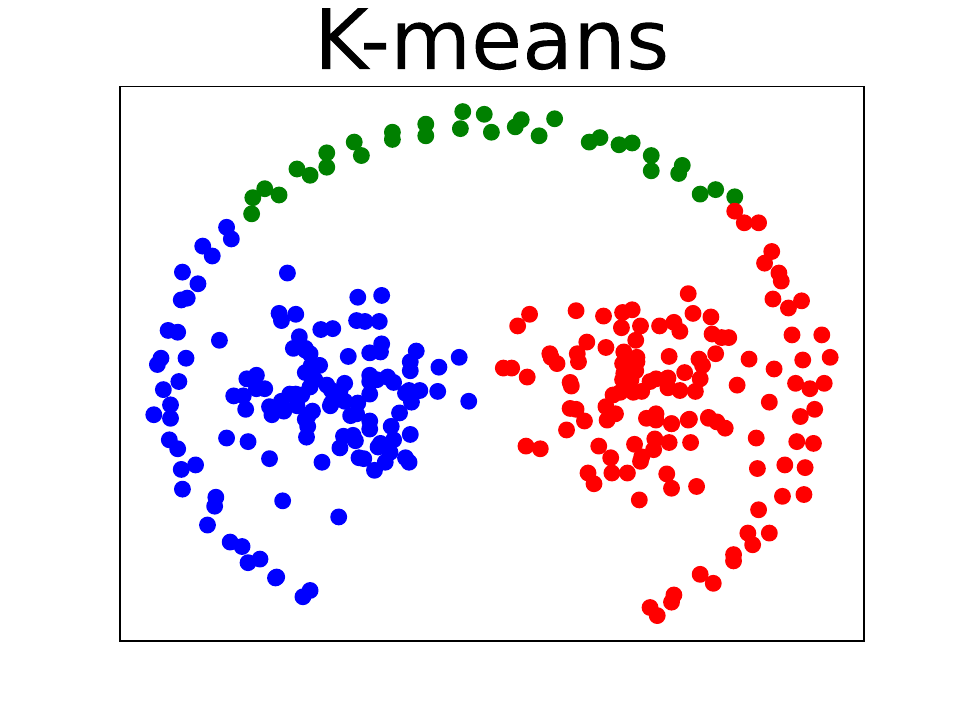}
    \includegraphics[width=0.19\textwidth]{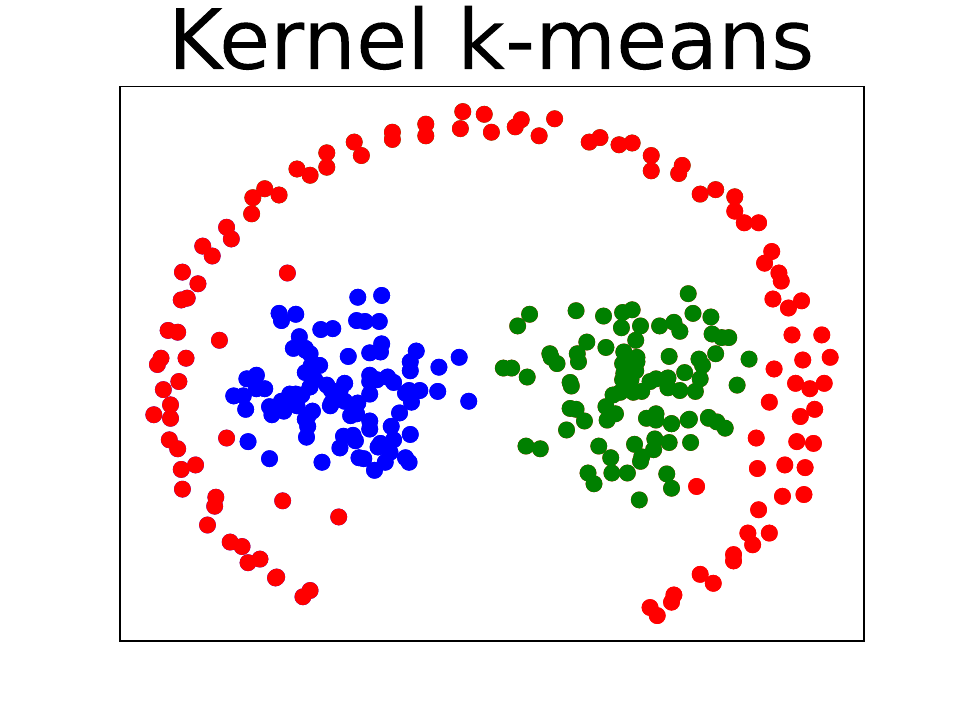}
    \includegraphics[width=0.19\textwidth]{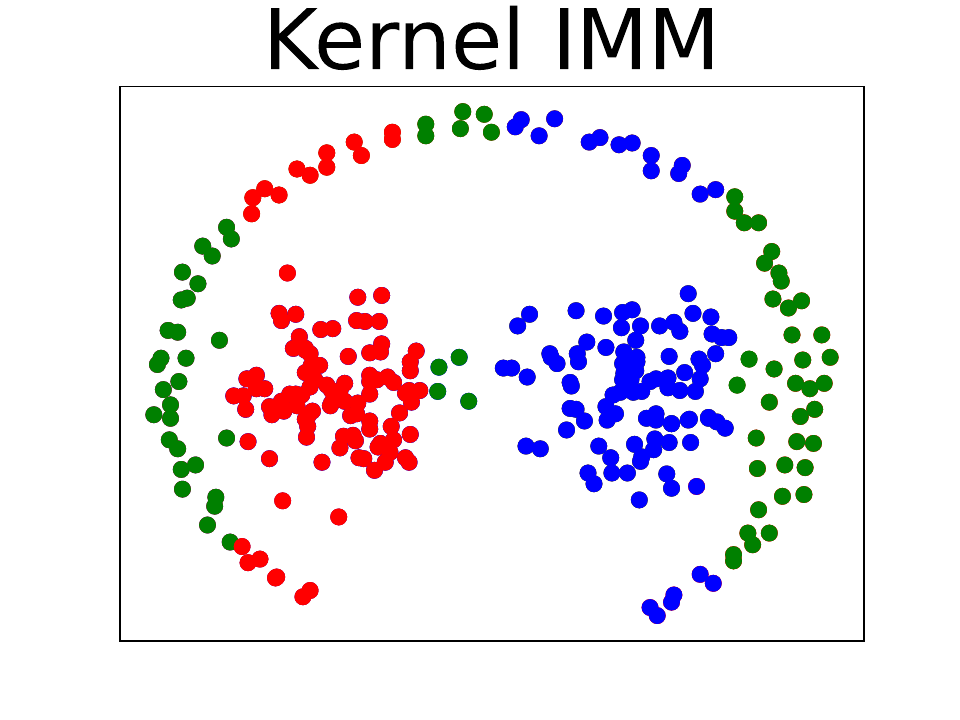}
    \includegraphics[width=0.19\textwidth]{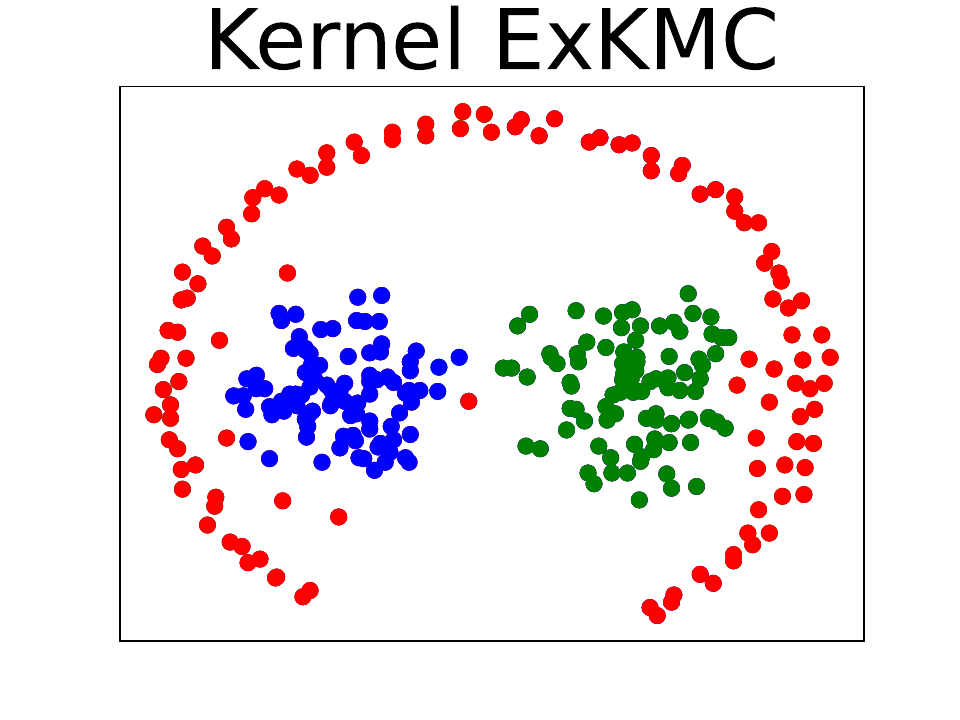}
    \includegraphics[width=0.19\textwidth]{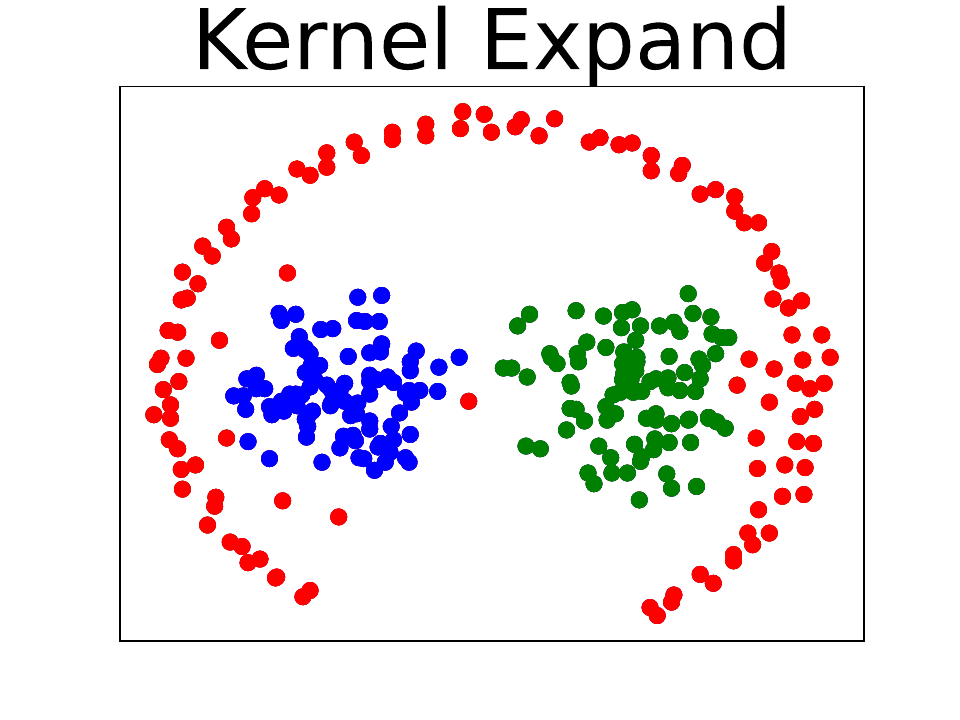}
  \caption{Standard $k$-means is ill-suited for clustering certain datasets, and this translates to explainable $k$-means (not plotted here). Kernel $k$-means recovers the ground truth well. However, Kernel IMM is restricted to $3$ leaves and not powerful enough to approximate it. To resolve this, we suggest Kernel ExKMC and Kernel Expand, which extend the tree to $6$ leaves.}\label{fig:pathbased_new}
\end{figure}

\begin{figure}[t]
\begin{minipage}{0.48\textwidth}
\begin{algorithm}[H]
\caption{Kernel ExKMC}
\label{alg:kernelexkmc}
\begin{algorithmic}
\Require{Number of leaves $m \in \mathbb{N}$, \newline reference clusters $C_1 \dots, C_k$, \newline decision tree $T$ with leaves $\hat C_1, \dots, \hat C_p$}
\Ensure{Tree with $m$ leaves $\mathcal{L}$}
\State{Initialize the set of leafs as $\mathcal{L} = \{\hat C_1, \dots, \hat C_p\}$}
\While{$|\mathcal{L}| < m$}
\State{Choose node $X^u \in \mathcal{L}$ and a cut $(i, \theta)$ minimizing \eqref{eq:exkmc}}
\State{Update $\mathcal{L}$ by replacing $X^u \in \mathcal{L}$ by its two child nodes $X^u_L, X^u_R$}
\EndWhile
\end{algorithmic}
\end{algorithm}
\end{minipage}\hfill
\begin{minipage}{0.48\textwidth}
\begin{algorithm}[H]
\caption{Kernel Expand}
\label{alg:kernelexpand}
\begin{algorithmic}
\Require{Number of leaves $m \in \mathbb{N}$, \newline reference clusters $C_1 \dots, C_k$, \newline decision tree $T$ with leaves $\hat C_1, \dots, \hat C_p$}
\Ensure{Tree with $m$ leaves $\mathcal{L}$}
\State{Initialize the set of leafs as $\mathcal{L} = \{\hat C_1, \dots, \hat C_p\}$}
\While{$|\mathcal{L}| < m$}
\State{Choose data $X^u \in \mathcal{L}$ and a cut $(i, \theta)$ minimizing \eqref{eq:expand}}
\State{Update $\mathcal{L}$ by replacing $X^u \in \mathcal{L}$ by its two child nodes $X^u_L, X^u_R$}
\EndWhile
\end{algorithmic}
\end{algorithm}
\end{minipage}
\end{figure}

\section{Experiments}\label{sec:experiments}

We validate our algorithms on a number of benchmark datasets, including three synthetic clustering datasets, \emph{Pathbased}, \emph{Aggregation} and \emph{Flame} \citep{ClusteringDatasets} and real datasets, \emph{Iris} \citep{fisher1936use} and \emph{Wisconsin breast cancer} \citep{street1993nuclear}. On all five datasets, we evaluate Kernel IMM as well as the greedy cost minimization algorithms from Section \ref{sec:greedy}, Kernel ExKMC and Kernel Expand, both of which refine Kernel IMM by adding more leaves. All experiments reported here use either the Laplace or the Gaussian kernel. In Appendix \ref{app:additive}, we derive suitable approximate feature maps and evaluate Kernel IMM for the additive $\chi^2$ kernel, which is well-suited for clustering histograms or distributions. These results underline the usefulness of our method beyond distance-based product kernels. Finally, we also compare Kernel IMM to CART, which performs similarly well. Details on all experiments can be found in Appendix \ref{app:experiments}. Although Kernel IMM can lead to sub-optimal solutions, its greedy refinements mostly result in a price of explainability close to 1 (see Figure \ref{fig:pathbased_new} for the \emph{Pathbased} dataset). Figure \ref{fig:experiments_results} reports the price of explainability of the three proposed explainable kernel clustering methods (closer to 1 is better). 
Since the ground truth is known in all datasets considered here, Figure \ref{fig:experiments_results} also reports the adjusted Rand index \citep{scikit-learn} that measures the agreement between the interpretable clusters and the true labels (a higher value is better). This shows that kernel $k$-means is naturally superior to $k$-means and explainable $k$-means, and one can observe that Kernel ExKMC and Kernel Expand mostly achieve Rand index as good as that of kernel $k$-means, establishing that our methods provide both interpretable and accurate clusters. In addition, Kernel IMM improves over $k$-means and IMM in recovering the ground truth.

\begin{figure}[t]
    \begin{minipage}[c]{0.45\textwidth}
        \includegraphics[width=\textwidth]{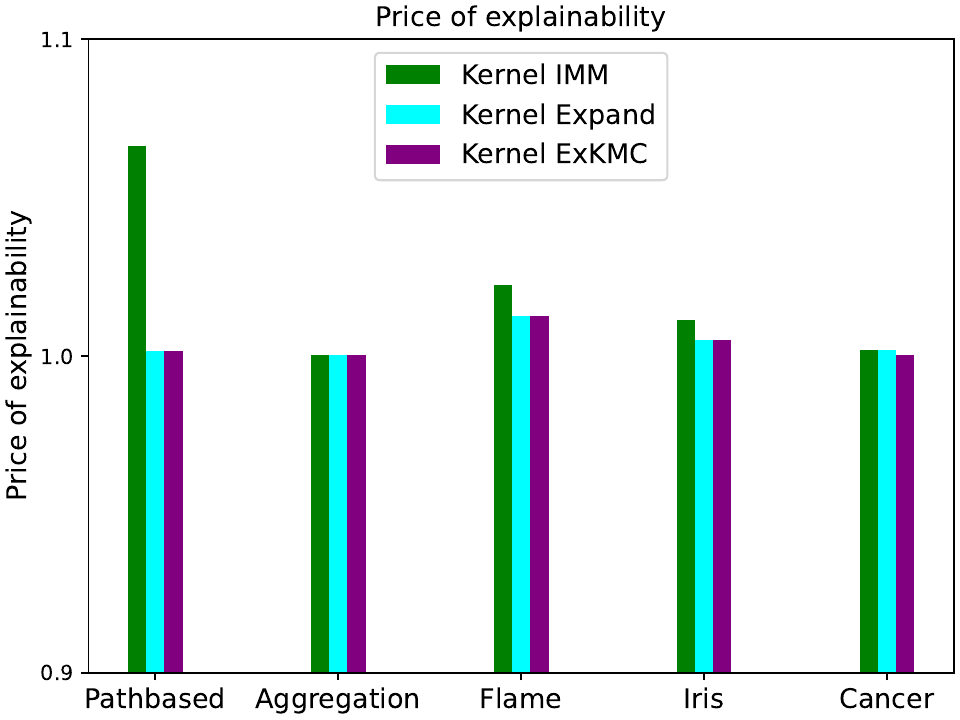}
    \end{minipage}\hfill
    \begin{minipage}[c]{0.45\textwidth}
        \includegraphics[width=\textwidth]{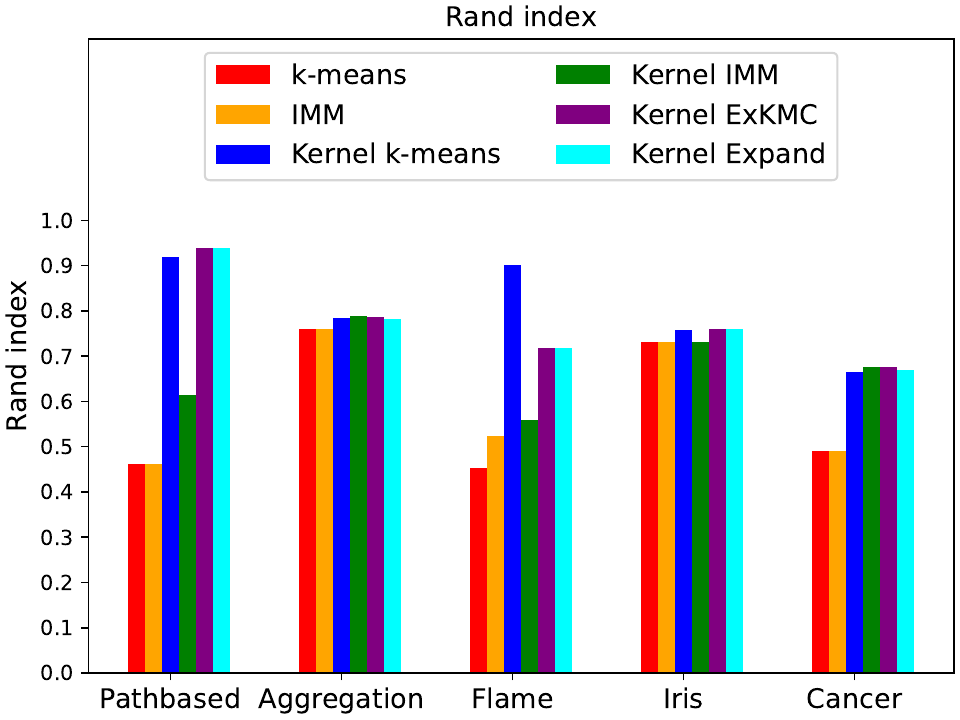}
    \end{minipage}
    \caption{We verify the approximation properties of our algorithms by computing the price of explainability (left plot). We also compare the clusters obtained on $k$-means and IMM, as well as kernel $k$-means and our three algorithms to the ground truth via the Rand index (right plot).}\label{fig:experiments_results}
\end{figure}

\section{Conclusion and Discussion}\label{sec:conclusion}

In this paper, we contribute to the increasing need for interpretable clustering methods by explaining kernel $k$-means using decision trees. By operating on carefully chosen features, interpretability can be preserved, while still allowing for approximation guarantees. We characterize the obstacles that interpretability faces in kernel clustering, and find that \textit{interpretable feature maps} play a key role. These maps may also be useful to understand the interpretability of other kernel methods, which still rely almost exclusively on post-hoc explainability.

\paragraph{On the price of explainability for kernels.}
We observe below that the price of explainability for kernel $k$-means, as defined in Definition \ref{def:kernelprice}, may not be finite for arbitrary kernels.

\begin{theorem}[\textbf{Unbounded price of explainability}]
\label{theo:unboundedprice}
Consider either the quadratic kernel $K(x,y) = \langle x, y \rangle^2$ or the $\epsilon$-neighborhood kernel $K(x,y) = \bm 1_{[0,\epsilon]}(\|x-y\|_2)$. For both, there exists a dataset $X$ such that $p(X) = \infty$.
\end{theorem}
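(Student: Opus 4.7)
The plan is to exhibit, for each of the two kernels, a small dataset $X \subset \mathbb{R}^2$ together with a partition attaining zero kernel $k$-means cost, and then show that no interpretable decision tree in the sense of Definition \ref{def:tree} reproduces this partition, so every interpretable tree has strictly positive cost. Since $p(T,X) = cost(T,X)/cost_{opt}(X)$, a zero denominator combined with a positive numerator forces $p(X) = \infty$. The proof therefore reduces to two explicit constructions with $k=2$, plus a finite case analysis of axis-aligned interval cuts.

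For the quadratic kernel $K(x,y) = \langle x,y\rangle^2$ I would take the XOR configuration $X = \{(1,1), (-1,-1), (1,-1), (-1,1)\}$. A feature map is $\psi(x) = xx^\top$, so $\psi(x) = \psi(y)$ iff $x = \pm y$; the partition pairing antipodes, $\{(1,1),(-1,-1)\} \cup \{(1,-1),(-1,1)\}$, makes the feature vectors within each cluster identical, and a direct computation from the $4\times 4$ kernel matrix gives cost $0$. For $k=2$, an interpretable tree is a single interval cut on one coordinate. Because every coordinate of every point is $\pm 1$, the only nontrivial interval cuts on axis $i$ produce the ``sign grouping'', e.g.\ on axis $1$ the partition $\{(1,1),(1,-1)\} \cup \{(-1,1),(-1,-1)\}$. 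Plugging these two sign partitions into the kernel cost formula gives a strictly positive value, so $p(X) = \infty$.

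For the $\epsilon$-neighborhood kernel I would take the four vertices of a square inscribed in the unit circle, $X = \{(1,0), (0,1), (-1,0), (0,-1)\}$, with $\epsilon = \sqrt{2}$. The pairwise distances are $\sqrt{2}$ for adjacent vertices and $2$ for antipodes, so $K$ equals $1$ on adjacent pairs and $0$ on antipodal pairs. The partition $\{(1,0),(0,1)\} \cup \{(-1,0),(0,-1)\}$ into adjacent halves has kernel submatrices equal to the all-ones matrix, hence cost $0$. To block any interpretable tree from attaining cost $0$, observe that projecting onto axis $1$ collapses $(0,1)$ and $(0,-1)$ to the same coordinate, so any interval cut on axis $1$ keeps them on the same side; symmetrically for axis $2$ and the points $(\pm 1, 0)$. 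The only zero-cost partitions separate a pair of collinear points, which is impossible, and enumerating the handful of interval-cut partitions on each axis shows they all have strictly positive cost, again yielding $p(X) = \infty$.

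The main obstacle in both parts is the exhaustiveness check: because Definition \ref{def:tree} permits two-sided cuts $x_i \in [\theta_1, \theta_2]$ rather than one-sided thresholds, one must rule out more partitions than a hyperplane argument would suggest. The clean observation I would use is that any interval cut on axis $i$ cannot separate two points sharing the same value of $x_i$; in both constructions every zero-cost partition requires splitting such a pair. Given this, the remaining work is a short tabulation of the finitely many interval-cut partitions for each of the $d=2$ axes and a direct cost computation from the $4\times 4$ kernel matrices.
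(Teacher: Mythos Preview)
Your proposal is correct, and in both halves it actually improves on the paper's own argument.

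For the quadratic kernel, the paper uses the axis-aligned set $\{(0,1),(0,-1),(1,0),(-1,0)\}$ with the antipodal partition $\{(0,\pm1)\}\cup\{(\pm1,0)\}$ and asserts that no interpretable tree reproduces it. But under Definition~\ref{def:tree}, which allows two-sided interval cuts $x_i\in[\theta_1,\theta_2]$, the cut $x_1\in[-\tfrac12,\tfrac12]$ recovers that partition exactly, so the paper's dataset in fact has $p(X)$ finite. Your XOR configuration $\{(\pm1,\pm1)\}$ is immune to this: on each axis both clusters project to the full set $\{-1,1\}$, so no interval can separate them, and your explicit attention to two-sided cuts is precisely what closes the gap.

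For the $\epsilon$-neighborhood kernel, the paper constructs eight points in $\mathbb{R}^{36}$ via a combinatorial design that forces every axis to mix the two clusters. Your four-point square in $\mathbb{R}^2$ with $\epsilon=\sqrt{2}$ achieves the same conclusion far more economically: the key observation that an interval cut cannot separate points sharing a coordinate, together with the fact that every zero-cost (adjacent-pair) partition splits both tied pairs $(0,\pm1)$ and $(\pm1,0)$, dispatches the case with a short enumeration. The paper's high-dimensional construction buys nothing extra here; your argument is strictly simpler.
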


The above result, proved in Appendix \ref{app:unboundedprice}, is in stark contrast to standard $k$-means, where IMM always ensures a bounded price of explainability. Our experiments further show that the price of explainability is close to $1$ even in cases where the interpretable clustering model is quite different from the reference partition. In fact, if the kernel $k$-means cost is $\Omega(n)$ for a dataset of size $n$ and the kernel is bounded, then the price of explainability is $O(1)$. This indicates that typical metrics should be adjusted depending on the kernel and the data. While the Rand index may be used if a ground truth is known, a fundamental question remains unsolved: \emph{Under what conditions on the data can a decision tree achieve a good agreement with the nonlinear partitions of kernel $k$-means?} Another question is whether approximation guarantees can be improved. It is very likely that specific kernels benefit from algorithms tailored to their cost functions. Finally, investigating lower bounds (beyond the ones a linear kernel inherits from existing results on explainable $k$-means) also poses an interesting direction for future work.

\paragraph{Ethics statement.} While we do not expect negative societal impacts of our method, we remark that clustering algorithms themselves can be biased, and this can translate to its interpretable approximations.

\paragraph{Reproducibility.} We add further details on the experiments in Appendix \ref{app:experiments}. In addition, all proofs for our theoretical results can be found in the appendix, and have been referenced in the main paper.

\subsubsection*{Acknowledgments}
This paper is supported by the DAAD programme Konrad Zuse Schools of Excellence in Artificial Intelligence, sponsored by the Federal Ministry of Education and Research, and the German Research Foundation (Research Grant GH 257/4-1).

\bibliography{main}
\bibliographystyle{iclr2024_conference}

\appendix
\section{Related work on explainable clustering}\label{app:relatedwork}

There are several papers that suggest improvements of the original IMM algorithm \citep{moshkovitz2020explainable}. Most rely on random threshold cuts to obtain better worst-case bounds on the price of explainability for $k$-means and $k$-medians
\citep{makarychev2022explainable, esfandiari2022almost, makarychev2023random}.
As of today, the sharpest known upper bound on the price of explainability for $k$-means is $O(k \log \log k)$ \citep{gupta2023price}. It is also known that the price of explainability for $k$-means is $\Omega(k)$ \citep{gamlath2021nearly}, leaving a small gap. While the above works provide dimension-independent guarantees, it is also known that in low dimensions, even better guarantees can be provided \citep{laber2021price, charikar2022near}. A different line of research has loosened the restriction to decision trees with $k$ leaves, leading to both improved practical performance \citep{frost2020exkmc} and better worst-case bounds on the price of explainability \citep{makarychev2022explainable}. In addition, \citet{laber2023shallow} investigate shallow decision trees for explainable clustering, while \citet{deng2023impossibility} show that depth reduction is impossible for explainable $k$-means and $k$-medians. There has also been some research on other cost functions such as $k$-centers \citep{laber2021price} or general $\| \cdot \|_p$ norms \citep{gamlath2021nearly}. However, to the best of our knowledge, there has not yet been an extension to the closely related kernel $k$-means problem, a gap that we aim to bridge in this work. Other works on interpretable clustering investigate polyhedral descriptions of the clusters \citep{lawless2023cluster}, choose similarity-based prototypes for each cluster \citep{carrizosa2022interpreting}, or fit sparse oblique trees to describe the clusters \citep{gabidolla2022optimal}.

\section{The kernel k-means algorithm}\label{app:kkmeans}

We first remind the reader of the following result.

\begin{lemma}
Let $K$ be a kernel operating on data $X \subset \mathbb{R}^d$. Denote $\phi$ for a (possibly data-dependent) feature map of $K$. Let $C_1, \dots, C_k$ be a partition of $X$ into $k$ clusters with means $c_1,\dots, c_k \in \mathcal{H}$. Then for any $x \in X$, we have
\begin{equation}
\label{kernelcost}
    \argmin_{l \in [k]} \| \phi(x) - c_l \|^2 = \argmin_{l \in [k]} \left( \frac{1}{|C_l|^2} \sum_{y,z \in C_l} K(y,z) - \frac{2}{|C_l|} \sum_{y \in C_l} K(x,y) \right)
\end{equation}
\end{lemma}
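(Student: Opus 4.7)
The plan is to proceed by direct expansion of the squared norm in the RKHS and then simplify using the defining property $\langle \phi(y), \phi(z) \rangle = K(y,z)$ of the feature map. Since the left-hand side involves $\phi(x)$ explicitly while the right-hand side is written purely in terms of the kernel $K$, the whole proof is an instance of the kernel trick.

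First, I would write $\|\phi(x) - c_l\|^2 = \|\phi(x)\|^2 - 2\langle \phi(x), c_l\rangle + \|c_l\|^2$ in the Hilbert space $\mathcal{H}$. The term $\|\phi(x)\|^2 = K(x,x)$ does not depend on the cluster index $l$, so it can be dropped when taking the $\argmin$ over $l \in [k]$. This reduces the problem to showing that
\begin{equation*}
\|c_l\|^2 - 2\langle \phi(x), c_l\rangle = \frac{1}{|C_l|^2} \sum_{y,z \in C_l} K(y,z) - \frac{2}{|C_l|} \sum_{y \in C_l} K(x,y).
\end{equation*}

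Next, I would substitute the definition of the cluster mean $c_l = \frac{1}{|C_l|}\sum_{y \in C_l} \phi(y)$ into both the inner product and the norm. Bilinearity of the inner product yields $\langle \phi(x), c_l\rangle = \frac{1}{|C_l|}\sum_{y \in C_l} K(x,y)$ and $\|c_l\|^2 = \langle c_l, c_l\rangle = \frac{1}{|C_l|^2}\sum_{y,z \in C_l} K(y,z)$. Plugging these two identities back into the expansion gives exactly the claimed expression on the right-hand side, and taking the $\argmin$ on both sides completes the argument.

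There is no real obstacle here: the result is a routine consequence of bilinearity together with the reproducing property $K(y,z) = \langle \phi(y), \phi(z)\rangle$, and the argument is independent of the particular choice of feature map (in particular, it applies equally to an implicit canonical feature map and to a data-dependent one obtained via Cholesky or eigendecomposition of the Gram matrix, as discussed in Section \ref{sec:challengekernelimm}). The only mild care needed is to note that $K(x,x)$ being dropped from the $\argmin$ is valid precisely because it is constant in $l$, which is why the identity holds for the $\argmin$ but not for the minimum value itself.
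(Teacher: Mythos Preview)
Your proposal is correct and follows essentially the same approach as the paper: expand $\|\phi(x)-c_l\|^2$ via the polarization identity, substitute the definition of $c_l$ as the mean of $\phi(y)$ over $C_l$, rewrite all inner products as kernel evaluations, and observe that the term $K(x,x)$ is independent of $l$ and hence can be dropped from the $\argmin$. There is nothing to add.
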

\begin{proof}
First recall the identity $\|a-b\|^2 = \|a\|^2 + \|b \|^2 - 2 \langle a , b \rangle$ and note that for any cluster $C_l$, its mean is given by
$$c_l = \frac{1}{|C_l|} \sum_{x \in C_l} \phi(x)$$
Expressing everything in terms of the kernel $K$, we see that
$$\|\phi(x) - c_l\|^2 = K(x,x) + \frac{1}{|C_l|^2} \sum_{y,z \in C_l} K(y,z) - \frac{2}{|C_l|} \sum_{y \in C_l} K(x,y)$$
Note that the first term does not depend on $l \in [k]$.
\end{proof}
Clearly, equation \ref{kernelcost} allows computing distances in the feature space by simply evaluating $K$.

\begin{algorithm}[H]
\caption{Kernel $k$-means}
\label{alg:kernelkmeans}
\begin{algorithmic}
\Require {Kernel $K$ for data $X \subset \mathbb{R}^d$, integer $k \in  \mathbb{N}$}
\Ensure {Partitioning of $X$ into $k$ clusters $\mathcal{C} = (C_1, \dots, C_k)$}
\\
\State{Initialize the clusters $C_1, \dots, C_k$}
\State{Converged $\gets$ FALSE}
\\
\While{Converged $=$ FALSE}
\For{$x \in X$}
\State{ Assign $x$ to $C'_l$ such that $l = \text{argmin}_{j \in [k]} \|\phi(x)- c_j\|^2$ using equation \eqref{kernelcost}}
\EndFor
\If{$C'_l = C_l$ for all $l \in [k]$}
    \State{Converged $\gets$ TRUE} 
\EndIf
\State{Update $\mathcal{C} = (C_1', \dots, C_k')$}
\EndWhile
\end{algorithmic}
\end{algorithm}

\section{Iterative Mistake Minimization (IMM) algorithm}\label{app:imm}

In this section, we review the IMM algorithm \citep{moshkovitz2020explainable}.
Consider data $X \subset \mathbb{R}^d$ and centers $\mathcal{M}$ obtained from the $k$-means algorithm. For any $x \in X$, we denote $c(x) \in \mathcal{M}$ for the correct center according to the $k$-means algorithm. The algorithm constructs a decision tree $T$ that sequentially the input space $\mathbb{R}^d$ into $k$ axis-aligned cells using threshold cuts of the form $x_i \gtrless \theta$. At every node $u$ of the tree, IMM chooses the threshold cut that minimizes the number of mistakes. A mistake happens if a point $x$ that arrives at node $u$ together with its correct cluster center $c(x) \in \mathcal{M}$, is now separated from this center as a result of the cut. This happens when $x_i \le \theta$ but $c_i(x) > \theta$ or vice versa. For any point $x \in X$, we denote $\mu(x)$ for the center that ends up in the same leaf of $T$ as $x$. This is not necessarily $c(x)$, as we illustrate in Figure \ref{fig:tree}.

\begin{figure}[t]
  \begin{minipage}[c]{0.45\textwidth}
    \includegraphics[width=\textwidth]{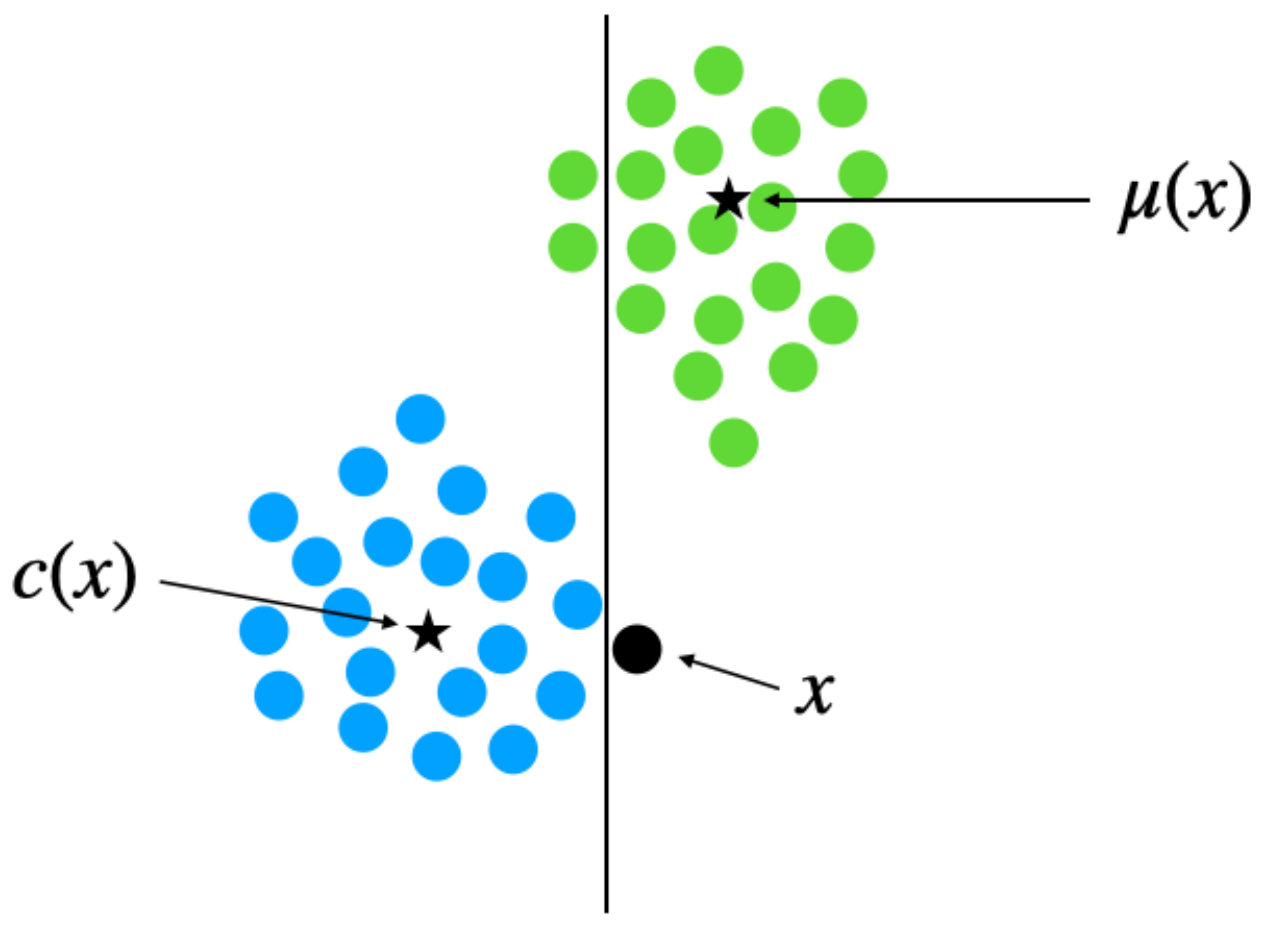}
  \end{minipage}\hfill
  \begin{minipage}[c]{0.45\textwidth}
    \caption{The threshold cut illustrated by the black vertical line defines a decision tree $T$ with 2 leaves. While some points do not end up in the same leaf as their corresponding center, the tree $T$ clearly does a good job in approximating the two clusters.}\label{fig:tree}
  \end{minipage}
\end{figure}

Let us also briefly recap the theoretical analysis of IMM for $k$-means, as presented by \citet{moshkovitz2020explainable}. For each node $u$ of the tree $T$ built by IMM, denote by $\mathcal{M}^u$ the remaining set of centers that arrive at the node. The tree $T$ induces a partitioning $\hat C_1 , \dots , \hat C_k$. Writing $t(u)$ for the number of mistakes at a node $u \in T$ and denoting $D(u) = \max_{a,b \in \mathcal{M}^u} \|a-b\|^2$, it holds that

\begin{equation}\label{eq:immstep1}
\begin{aligned}
    cost(\hat C_1 , \dots , \hat C_k) &\le \sum_{l=1}^{k} \sum_{x \in \hat C_l} \| x - \mu(x) \|^2 \\ 
    &\le \sum_{l=1}^{k} \sum_{x \in \hat C_l} \left( 2 \| x - c(x) \|^2 + 2 \|c(x) - \mu(x) \|^2 \right) \\
    &\le 2 \cdot cost(C_1, \dots, C_k) + 2 \sum_{u \in T} t(u) D(u)
\end{aligned}    
\end{equation}

The main argument in IMM is now given by the following result.
\begin{lemma}\label{lemma:immlemma}
    Let $u \in T$ be any node of the IMM decision tree $T$. Denote $X_{cor}^u$ for the subset of points $x$ that arrive at $u$ together with their correct center $c(x)$.  
    \begin{enumerate}
        \item For any $i \in [d]$, it holds that 
        \begin{align}\label{eq:immlemma1}
            t(u) \cdot \max_{a,b \in \mathcal{M}^u} (a_i - b_i)^2 \le 4k \cdot \sum_{x \in X^u_{cor}} (x_i - c_i(x))^2
        \end{align}
        \item This directly implies 
        \begin{align}\label{eq:immlemma2}
            t(u) D(u) \le 4k \cdot \sum_{x \in X^u_{cor}} \|x_i - c_i(x)\|^2
        \end{align}
    \end{enumerate}
\end{lemma}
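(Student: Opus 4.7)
The plan is to exploit IMM's defining optimality. At node $u$, among all valid threshold cuts $(i',\theta)$ that split $\mathcal{M}^u$ into two nonempty subsets, IMM chooses the one with the fewest new mistakes. In particular, for every valid threshold $\theta$ in the fixed coordinate $i$, we have $t(u) \le t_{i,\theta}(u)$, where $t_{i,\theta}(u) := |\{x \in X^u_{cor} : \theta \text{ lies between } x_i \text{ and } c_i(x)\}|$. Part (1) will follow by choosing $\theta$ cleverly in coordinate $i$; part (2) by summing part (1) across the $d$ coordinates.

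For part (1), I would fix $a,b \in \mathcal{M}^u$ attaining $R_i := |a_i - b_i| = \max_{a',b' \in \mathcal{M}^u}|a'_i - b'_i|$, WLOG $a_i < b_i$. The $i$-th coordinates of the at most $k$ centers in $\mathcal{M}^u$ partition $[a_i, b_i]$ into at most $k-1$ gaps, so by pigeonhole some gap has width $\ge R_i/(k-1)$. Placing $\theta^*$ at its midpoint ensures $|\theta^* - c'_i| \ge R_i/(2(k-1))$ for every center $c' \in \mathcal{M}^u$ (and $\theta^*$ is a valid cut, since it strictly separates the two centers bounding the gap). Any $x \in X^u_{cor}$ counted in $t_{i,\theta^*}(u)$ has $\theta^*$ between $x_i$ and $c_i(x)$, which forces $|x_i - c_i(x)| \ge |\theta^* - c_i(x)| \ge R_i/(2(k-1))$. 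A Markov-type count then yields
\[
t(u) \le t_{i,\theta^*}(u) \le \frac{4(k-1)^2}{R_i^2}\sum_{x \in X^u_{cor}}(x_i - c_i(x))^2,
\]
i.e., $t(u)\,R_i^2 \le O(k^2)\sum_x (x_i - c_i(x))^2$. To sharpen the constant from $O(k^2)$ down to the stated $4k$, one has to exploit IMM's optimality simultaneously at all $k-1$ gap midpoints (not just the largest gap) and combine with a Cauchy--Schwarz-type averaging argument rather than one Markov step; this tightening of the constant is the main technical obstacle I anticipate.

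Part (2) then follows cleanly from part (1). Let $(a^*, b^*) \in \mathcal{M}^u$ attain $D(u) = \|a^* - b^*\|^2 = \sum_{i=1}^d (a^*_i - b^*_i)^2$. Since $(a^*_i - b^*_i)^2 \le \max_{a,b \in \mathcal{M}^u}(a_i - b_i)^2$ for each $i$, summing the inequality of part (1) across coordinates gives
\[
t(u)\,D(u) \le \sum_{i=1}^d t(u)\,\max_{a,b}(a_i - b_i)^2 \le 4k\sum_{i=1}^d\sum_{x \in X^u_{cor}}(x_i - c_i(x))^2 = 4k\sum_{x \in X^u_{cor}}\|x - c(x)\|^2,
\]
matching the claimed inequality (I read $\|x_i - c_i(x)\|^2$ in the excerpt as a typo for $\|x - c(x)\|^2$, since $x_i, c_i(x) \in \mathbb{R}$ are scalars and otherwise the coordinate sum producing the factor of $d$ would be lost).
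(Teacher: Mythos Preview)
Your approach is the same as the paper's (and the original IMM paper's): place thresholds at midpoints between projected centers and use that IMM never makes more than $t(u)$ mistakes at any valid cut. The paper itself gives only a one-line sketch of part (1) and defers to \citet{moshkovitz2020explainable} for details, so your write-up is already at least as thorough as what appears here. Your derivation of part (2) is exactly the coordinate-summation the paper describes, and you are right that $\|x_i - c_i(x)\|^2$ in the displayed inequality is a typo for $\|x - c(x)\|^2$.

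The only incomplete step is the one you flagged: sharpening $4(k-1)^2$ to $4k$. Your instinct (use all gap midpoints simultaneously, combined with Cauchy--Schwarz) is exactly right; here is the missing computation. Sort the projected centers $\mu_1 < \cdots < \mu_m$, set $\ell_j = \mu_{j+1} - \mu_j$ and $\theta_j = (\mu_j + \mu_{j+1})/2$, and let $J(x)$ be the set of $j$ with $\theta_j$ strictly between $x_i$ and $c_i(x)$. One checks directly that $\sum_{j \in J(x)} \ell_j \le 2|x_i - c_i(x)|$. Now double-count $\sum_j \ell_j^2\, t_{i,\theta_j}(u)$: on one hand it is at least $t(u)\sum_j \ell_j^2 \ge t(u)\, R_i^2/(m-1)$ by Cauchy--Schwarz on $\sum_j \ell_j = R_i$; on the other hand, swapping the order of summation,
\[
\sum_j \ell_j^2\, t_{i,\theta_j}(u) \;=\; \sum_{x \in X^u_{cor}} \sum_{j \in J(x)} \ell_j^2 \;\le\; \sum_{x \in X^u_{cor}} \Bigl(\sum_{j \in J(x)} \ell_j\Bigr)^{2} \;\le\; 4 \sum_{x \in X^u_{cor}} (x_i - c_i(x))^2.
\]
Combining yields $t(u)\, R_i^2 \le 4(m-1)\sum_x (x_i - c_i(x))^2 \le 4k \sum_x (x_i - c_i(x))^2$. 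Your single-gap Markov argument keeps only the largest term of $\sum_j \ell_j^2\, t_{i,\theta_j}(u)$, which is precisely why it loses the extra factor of $k$.
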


The proof of the first statement is based on the observation that along every axis $i \in [d]$, and for any threshold cut halfway between two centers $a,b \in \mathcal{M}^u$ (projected to the $i$-th axis), at least $t(u)$ mistakes are made, by definition of $t(u)$. The second statement then directly follows from the first by summing over all $d$ coordinates. Together with Equation \ref{eq:immstep1} and the fact that the maximum depth of the IMM tree is $k$, Lemma \ref{lemma:immlemma} yields $O(k^2)$ bounds on the price of explainability.
Let us use this opportunity to point out two things.

\begin{itemize}
    \item Since only threshold cuts halfway between two projected centers are considered in the proof of Lemma \ref{lemma:immlemma}, the algorithm actually preserves worst-case bounds even when we only check the number of mistakes along these $O(dk)$ threshold cuts, instead of along all possible $O(dn)$ cuts. In other words, the IMM algorithm (as well as our kernelized version) can be run in data-independent time. 
    \item The IMM algorithm is essentially a supervised learning algorithm, recreating a partition $C_1, \dots, C_k$ no matter how optimal it may be. This observation is crucial when we analyze Kernel IMM on our surrogate features (with respect to a surrogate kernel), for which the reference clustering may not even be close to the optimal partition with respect to this surrogate kernel.
\end{itemize}

\section{Proofs from Section \ref{sec:challengekernelimm}}

\subsection{Proof of Theorem \ref{theo:gaussiankernel1}}
\label{app:gaussiankernel1}

\begin{theorem*}(\textbf{The Gaussian kernel cannot have interpretable feature maps})
Consider the Gaussian kernel in $d>1$ dimensions. There exists a dataset $X$ such that for any feature map $\phi : X \rightarrow \mathbb R^D$ satisfying $\langle \phi(x), \phi(y) \rangle = K(x,y)$ for all $x, y \in X$, there exists some $j \in [D]$ such that $\phi_j$ depends on more than just one input dimension of $x \in X$.
\end{theorem*}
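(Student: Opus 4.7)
The key structural observation is that the interpretability hypothesis forces the Gaussian kernel to admit an \emph{additive} decomposition across input coordinates, whereas the Gaussian kernel is intrinsically \emph{multiplicative}. Concretely, I would assume for contradiction that an interpretable $\phi : X \to \mathbb{R}^D$ exists, and partition $[D] = S_1 \sqcup \cdots \sqcup S_d$ according to which input coordinate each $\phi_j$ depends on (by Definition \ref{def:map}, exactly one). Then for all $x,y \in X$,
\begin{equation*}
K(x,y) = \sum_{j=1}^D \phi_j(x)\phi_j(y) = \sum_{i=1}^d A_i(x_i,y_i), \qquad A_i(u,v) := \sum_{j \in S_i}\phi_j(u)\phi_j(v),
\end{equation*}
so the Gaussian kernel restricted to $X$ decomposes additively coordinate-by-coordinate.

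\textbf{Choice of dataset.} I would work first in $d=2$ and then embed. Take four points forming the corners of an axis-aligned rectangle, $X = \{(0,0),(a,0),(0,b),(a,b)\}$ for some $a,b > 0$. Writing the four constraints coming from pairing the origin with each other point, the three equations involving $(0,0)$ determine $A_1(0,a)$ and $A_2(0,b)$ in terms of $A_1(0,0), A_2(0,0)$ and the Gaussian kernel values. Substituting into the fourth constraint $A_1(0,a) + A_2(0,b) = K((0,0),(a,b))$ and cancelling $A_1(0,0)+A_2(0,0) = 1$ yields the scalar identity
\begin{equation*}
e^{-\gamma a^2} + e^{-\gamma b^2} - 1 = e^{-\gamma(a^2+b^2)}.
\end{equation*}
Setting $u = e^{-\gamma a^2}$, $v = e^{-\gamma b^2}$, this reduces to $(u-1)(v-1)=0$, forcing $a=0$ or $b=0$, contradicting $a,b>0$.

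\textbf{Lifting to $d>2$.} For higher dimensions, embed the same four points into $\mathbb{R}^d$ by padding with zeros on coordinates $3,\ldots,d$. The additive decomposition then reads $K(x,y) = A_1(x_1,y_1) + A_2(x_2,y_2) + C$ with $C = \sum_{i\ge 3} A_i(0,0)$ a constant on $X$, which can be absorbed into $A_1$ without affecting the argument above; the same contradiction results.

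\textbf{Main obstacle.} The only delicate point is making sure that the additive reduction is used in full generality, i.e., that no assumption is secretly made on $D$ or on the smoothness/positivity of the $A_i$'s: the argument only uses finitely many evaluations on a finite $X$, so positive semidefiniteness of each $A_i$ is never needed. Once the additive decomposition is written down, the rest is an elementary four-point calculation. I would also briefly note that the same rectangle construction works verbatim for any strictly positive-definite product kernel $K(x,y) = \prod_i h(x_i-y_i)$ with $h(0)=1$ and $h$ non-constant, which clarifies why the obstruction is about multiplicativity rather than anything specific to Gaussians.
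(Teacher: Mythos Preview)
Your proof is correct and follows essentially the same strategy as the paper's: reduce the interpretability hypothesis to an additive coordinate-wise decomposition $K(x,y)=\sum_i A_i(x_i,y_i)$, then evaluate on a small configuration of points sharing coordinates to force an algebraic identity that the Gaussian kernel violates. The paper uses only three points $(0,0),(1,0),(1,1)$ and derives $K_{1,3}=K_{1,2}+K_{2,3}-1$, obtaining a contradiction via negativity for $\gamma$ large (then rescaling for general $\gamma$); your four-point rectangle gives the slightly cleaner identity $(u-1)(v-1)=0$, which yields the contradiction for every $\gamma>0$ at once and, as you note, extends verbatim to any non-trivial product kernel with $h(0)=1$.
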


\begin{proof}
We present a simple dataset $X$ such that no feature map for the Gaussian kernel $K(x,y) = \exp(- \gamma\|x-y\|^2)$ can depend only on one input coordinate in each of its elements $\phi_1, \dots, \phi_D$, no matter how large $D$ may be. Consider three vectors in $ \mathbb{R}^2$, given by 
\begin{equation*}
    x^{(1)} = \begin{pmatrix} 0 \\ 0 \end{pmatrix}, 
    x^{(2)} = \begin{pmatrix} 1 \\ 0 \end{pmatrix}, 
    x^{(3)} = \begin{pmatrix} 1 \\ 1 \end{pmatrix}
\end{equation*}
For sufficiently large $\gamma>0$, we know that $\bm K_{1,2} + \bm K_{2,3} < 1$. Assume there exists some interpretable feature map $\phi : X \rightarrow \mathbb R^D$. Then, we can write $\phi(x) = \left( f(x), g(x) \right)$, where $f(x)$ is a function of $x_1$ and $g(x)$ is a function of $x_2$. By our choice of points, we may write
\begin{align*}
    \phi(x^{(1)}) = (a,b) \\
    \phi(x^{(2)}) = (c,b) \\
    \phi(x^{(3)}) = (c,d)
\end{align*}
for some vectors $a,b,c,d$. This implies
\begin{align*}
    \bm K_{1,3} &= a^T c + b^T d \\
    &= a^Tc + \|b\|^2 + \left( b^Td - \|b\|^2 \right) \\
    &= \bm K_{1,2} + \left( b^Td + \|c\|^2 - 1 \right) \\
    &= \bm K_{1,2} + \bm K_{2,3} -1 \\
    &<0
\end{align*}
a contradiction to $\bm K_{1,3} \ge 0$. For other $\gamma>0$, simply rescale the data accordingly.
\end{proof}

\subsection{Proof of Theorem \ref{theo:gaussiankernel2}}
\label{app:gaussiankernel2}

\begin{theorem*}
Consider the one-dimensional Gaussian kernel $K(x,y) = \exp(-|x-y|^2)$. Then, there exists a dataset $X \subset \mathbb{R}$ such that for any feature map $\phi = \left( \phi_j \right)_{j=1}^D$ there exists a component $\phi_j$ that is not monotonic.
\end{theorem*}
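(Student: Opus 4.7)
The plan is to argue by contradiction. Suppose there is a feature map $\phi = (\phi_1,\ldots,\phi_D)$ on a three-point set $X = \{x_1 < x_2 < x_3\} \subset \mathbb{R}$ such that every coordinate $\phi_j$ is monotonic on $X$. Monotonicity then forces $\phi_j(x_2)$ to lie between $\phi_j(x_1)$ and $\phi_j(x_3)$ in every coordinate, so the scalar factors $(\phi_j(x_1) - \phi_j(x_2))$ and $(\phi_j(x_3) - \phi_j(x_2))$ have opposite signs (one may be zero) for each $j$. This gives the coordinatewise inequality $(\phi_j(x_1) - \phi_j(x_2))(\phi_j(x_3) - \phi_j(x_2)) \le 0$, and summing over $j \in [D]$ produces the vector inequality $\langle \phi(x_1) - \phi(x_2),\, \phi(x_3) - \phi(x_2)\rangle \le 0$.

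Next I would expand this inner product using $\langle \phi(a), \phi(b)\rangle = K(a,b)$, together with $K(x,x) = 1$ for the Gaussian kernel. This collapses the monotonicity assumption into a purely kernel-level three-point inequality,
\begin{equation*}
K(x_1,x_3) + 1 \;\le\; K(x_1,x_2) + K(x_2,x_3).
\end{equation*}
Thus it suffices to exhibit three real numbers $x_1 < x_2 < x_3$ for which this inequality fails.

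Finally, I would take equally spaced points $x_1 = 0,\; x_2 = s,\; x_3 = 2s$ with $s > 0$ chosen large relative to the bandwidth. Then $K(x_1,x_2) = K(x_2,x_3) = t$ and $K(x_1,x_3) = t^{4}$, where $t = e^{-\gamma s^{2}}$. The kernel inequality becomes $t^{4} + 1 \le 2t$, i.e. $t^{4} - 2t + 1 \le 0$. A direct check shows $t^{4} - 2t + 1 > 0$ whenever $t \le \tfrac{1}{2}$, since then $2t - 1 \le 0 < t^{4}$. Choosing $s$ so that $e^{-\gamma s^{2}} \le \tfrac{1}{2}$ (for example $s^{2} \ge \gamma^{-1}\ln 2$) yields a contradiction, proving that at least one $\phi_j$ must fail to be monotonic on $X$.

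The main obstacle is the initial step: recognising that the monotonicity hypothesis on coordinates translates into a clean inner-product inequality via the elementary sign observation. Once that reduction is made, everything else is a one-line computation with $t = e^{-\gamma s^{2}}$, and the argument does not depend on the dimension $D$ of the (possibly data-dependent) feature map.
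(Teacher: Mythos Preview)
Your proof is correct and follows essentially the same approach as the paper: both reduce the monotonicity hypothesis on a three-point set $x_1<x_2<x_3$ to the kernel inequality $K(x_1,x_3)+1 \le K(x_1,x_2)+K(x_2,x_3)$, and then violate it by taking the points far enough apart. Your derivation of this inequality via the sign observation $(\phi_j(x_1)-\phi_j(x_2))(\phi_j(x_3)-\phi_j(x_2))\le 0$ is a bit more direct than the paper's, which explicitly splits the coordinates into non-decreasing and non-increasing blocks and computes norms of successive differences to reach the same conclusion; but the underlying idea and the contradiction are identical.
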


\begin{proof}
    Consider the Gaussian kernel $K(x,y) = \exp \left(-(x-y)^2 \right)$ on $\mathbb{R}$.
    Choose a dataset $X = \{x^{(1)}, x^{(2)}, x^{(3)} \}$ with $x^{(1)} < x^{(2)} < x^{(3)}$ and corresponding kernel matrix $\bm K$ satisfying
    \begin{align*}
        \bm K_{1,2} + \bm K_{2,3} < \bm K_{1,3} + 1
    \end{align*}
    This is surely the case when the points are at a sufficiently large distance. Now assume there exists a feature map $\phi : X \rightarrow \mathbb{R}^D, \phi(x^{(i)}) = (v_i, w_i)$ where $(v_i)$ is non-decreasing and $(w_i)$ is non-increasing for $i=1,2,3$. In other words, we assume there exist two vectors $v_1, w_1$ as well as non-negative vectors $\epsilon_1, \epsilon_2, \epsilon_3, \delta_1, \delta_2, \delta_3$ such that, writing
    $$\bm \Phi = 
    \begin{pmatrix}
    &v_1 &v_1 + \epsilon_1 &v_1 + \epsilon_1 + \epsilon_2 \\
    &w_1 &w_1 - \delta_1 &w_1 - \delta_1 - \delta_2 
    \end{pmatrix} = 
    \begin{pmatrix}
    &v_1 &v_1 + \epsilon_1 &v_1 + \epsilon_3 \\
    &w_1 &w_1 - \delta_1 &w_1 - \delta_3 
    \end{pmatrix}$$
    it holds that $\bm \Phi^T \bm \Phi = \bm K$. This condition gives rise to
    \begin{align*}
        &v_1^T v_1 + w_1^T w_1 = 1 \\
        &(v_1 + \epsilon_1)^T (v_1 + \epsilon_1) + (w_1 - \delta_1)^T (w_1 - \delta_1) = 1 \\
        &v_1^T (v_1 + \epsilon_1) + w_1^T (w_1 - \delta_1) = K_{1,2}
    \end{align*}
    which implies
    \begin{align*}
        &v_1^T \epsilon_1 - w_1^T \delta_1 = \bm K_{1,2} - 1
    \end{align*}
    and hence
    \begin{align*}
        \|\epsilon_1\|^2 + \|\delta_1\|^2 = 2 - 2 \bm K_{1,2}
    \end{align*}
    Using the same arguments for the two other pairs of points from $X$, we derive in the same way
    \begin{align*}
    &\|\epsilon_2\|^2 + \|\delta_2\|^2 = 2 - 2 \bm K_{2,3} \\
    &\|\epsilon_3\|^2 + \|\delta_3\|^2 = 2 - 2 \bm K_{1,3}
    \end{align*}
    Since $\epsilon_3 = \epsilon_1 + \epsilon_2$ and $\delta_3 = \delta_1 + \delta_2$, this implies
    \begin{align*}
        \| \epsilon_1\|^2 + \| \epsilon_2\|^2 + 2 \epsilon_1^T \epsilon_2 +
        \| \delta_1\|^2 + \| \delta_2\|^2 + 2 \delta_1^T \delta_2 = 2 - 2 \bm K_{1,3} 
    \end{align*}
    However, this is equivalent to
    \begin{align*}
        &2 - 2 \bm K_{1,2} + 2 - 2 \bm K_{2,3} + 2 \left( \epsilon_1^T \epsilon_2 + \delta_1^T \delta_2 \right) = 2 - 2 \bm K_{1,3} \iff \\
        & 0 < \epsilon_1^T \epsilon_2 + \delta_1^T \delta_2 = \bm K_{1,2} + \bm K_{2,3} -1 - \bm K_{1,3} < 0
    \end{align*}
    a contradiction.
\end{proof}

\section{Decoupling dimensions for bounded product kernels}\label{app:decouple}

In this section, we prove that decoupling the input dimensions of bounded product kernels introduces errors no larger than $O(d)$. Recall that our surrogate feature map is given by
\begin{align*}
    \phi(x) = \left( \phi_i(x) \right)_{i=1}^d
\end{align*}
where each $\phi_i$ is a valid feature map for the one-dimensional kernel $K_i$. This surrogate feature map itself is associated with a surrogate additive kernel:
\begin{align*}
    \langle \phi(x), \phi(y) \rangle = \sum_{i=1}^d \langle \phi_i(x), \phi_i(y) \rangle = \sum_{i=1}^d K_i(x,y) \neq K(x,y)
\end{align*}

Of course, it is not clear that running IMM with respect to this surrogate kernel is actually sensible and not only ensures interpretability, but also preserves worst-case bounds. The justification is given in the following Lemma.

\begin{lemma}\label{lemma:decouple}
    Let $X \subset \mathbb{R}^d$ be a dataset, partitioned into $k$ clusters $C_1, \dots, C_k$. Denote by $cost(C_1, \dots, C_k)$ the kernel $k$-means cost function associated with a distance-based product kernel $K$ on $\mathbb{R}^d$, and by $cost_i$ the cost with respect to the feature map $\phi_i$. Then 
    \begin{align*}
        cost(C_1, \dots, C_k) \le \sum_{i=1}^d cost_i(C_1,\dots,C_k) \le d \cdot cost(C_1, \dots, C_k)
    \end{align*}
\end{lemma}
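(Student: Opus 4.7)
The plan is to reduce both inequalities to a single pointwise inequality on the kernel values, after rewriting each cost via pairwise distances in feature space.

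First I would use the standard identity $\sum_{x\in C}\|\phi(x)-\bar{\phi}\|^2=\frac{1}{2|C|}\sum_{x,y\in C}\|\phi(x)-\phi(y)\|^2$ together with $K(x,x)=\prod_iK_i(x,x)=1$ and $K_i(x,x)=1$ to obtain the compact expressions
\begin{equation*}
cost(C_1,\dots,C_k)=\sum_{l=1}^k\frac{1}{|C_l|}\sum_{x,y\in C_l}\bigl(1-K(x,y)\bigr),\qquad cost_i(C_1,\dots,C_k)=\sum_{l=1}^k\frac{1}{|C_l|}\sum_{x,y\in C_l}\bigl(1-K_i(x,y)\bigr).
\end{equation*}
Summing the second identity over $i$ shows that it suffices to compare, pair by pair $(x,y)\in C_l\times C_l$, the three quantities $1-\prod_{i=1}^dK_i(x,y)$, $d-\sum_{i=1}^dK_i(x,y)$, and $d\bigl(1-\prod_{i=1}^dK_i(x,y)\bigr)$. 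Since $K$ is a distance-based product kernel, each $K_i(x,y)=h(|x_i-y_i|)\in[0,1]$, so it is enough to verify that for any $p_1,\dots,p_d\in[0,1]$,
\begin{equation*}
1-\prod_{i=1}^dp_i\;\le\;d-\sum_{i=1}^dp_i\;\le\;d\Bigl(1-\prod_{i=1}^dp_i\Bigr).
\end{equation*}

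The upper bound is essentially AM-GM on $[0,1]$: because $p_j\le 1$ for every $j$, we have $\prod_ip_i\le p_i$ for each $i$, and summing these $d$ inequalities gives $d\prod_ip_i\le\sum_ip_i$, which rearranges to the desired upper bound. The lower bound, equivalent to $\sum_ip_i-\prod_ip_i\le d-1$, I would prove by induction on $d$: the $d=1$ case is trivial, and for the inductive step I write $\sum_{i=1}^{d+1}p_i-\prod_{i=1}^{d+1}p_i=\sum_{i=1}^dp_i+p_{d+1}\bigl(1-\prod_{i=1}^dp_i\bigr)$, then use $p_{d+1}\le 1$ and $1-\prod_{i=1}^dp_i\ge 0$ to bound this by $\sum_{i=1}^dp_i+1-\prod_{i=1}^dp_i\le(d-1)+1=d$ via the inductive hypothesis.

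Combining the pointwise bounds with the pairwise-distance representation and summing over $x,y\in C_l$ and then over clusters yields the claimed chain of inequalities. The main obstacle is the lower bound $\sum p_i-\prod p_i\le d-1$, which cannot be obtained by a one-line AM-GM manipulation; the induction above is the cleanest route, though one could alternatively reparameterize via $q_i=1-p_i$ and invoke a Weierstrass-type product inequality. A minor technical point to flag is that the argument relies on $K_i$ being nonnegative, which is automatic for distance-based product kernels with $h\ge 0$ (e.g.\ Gaussian, Laplace) but would need to be stated explicitly.
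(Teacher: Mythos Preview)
Your proof is correct and follows essentially the same approach as the paper: both reduce to the pairwise-distance representation of the kernel $k$-means cost and then compare $1-\prod_i K_i(x,y)$ with $\sum_i(1-K_i(x,y))$ pointwise. The only cosmetic differences are that the paper obtains the upper bound via AM--GM (rather than your direct $\prod_i p_i\le p_j$ argument) and the lower bound via the telescoping inequality $|\prod a_i-\prod b_i|\le\sum|a_i-b_i|$ with $a_i=1$ (rather than your induction), but these establish the identical pointwise bounds.
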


\begin{proof}
First, use the GM-AM inequality and the fact that $K_i(x,y) \le 1$ to obtain
\begin{align*}
    K(x,y) = \prod_{i=1}^d K_i(x,y) \le \left( \prod_{i=1}^d K_i(x,y) \right)^{1/d} \le \frac{1}{d} \sum_{i=1}^d K_i(x,y)
\end{align*} 
Thus, rewriting the kernel $k$-means cost function, we see that
\begin{align*}
    cost(C_1, \dots, C_k) &= |X| - \sum_{l=1}^k \frac{1}{|C_l|} \sum_{x,y \in C_l} K(x,y) \\
    &\ge |X| - \sum_{l=1}^k \frac{1}{|C_l|} \sum_{x,y \in C_l} \frac{1}{d} \sum_{i=1}^d K_i(x,y) \\
    &= \frac{1}{d} \sum_{i=1}^d \left( |X| - \sum_{l=1}^k \frac{1}{|C_l|} \sum_{x,y \in C_l} K_i(x,y) \right) \\
    &= \frac{1}{d} \sum_{i=1}^d cost_l(C_1, \dots, C_k)
\end{align*}
This implies the upper bound from Lemma \ref{lemma:decouple}. For the lower bound, note that for any $x,y \in \mathbb{R}^d$
\begin{align*}
    \|\phi(x) - \phi(y)\|^2 &= 2\left( 1 - K(x,y) \right) \\
    &= 2 \left(1 - \prod_{i=1}^d K_i(x,y) \right) \\
    &\le 2 \left( \sum_{i=1}^d (1-K_i(x,y)) \right) \\
    &= \sum_{i=1}^d \|\phi_i(x) - \phi_i(y)\|^2
\end{align*}
where we use the fact that $|\prod_{i=1}^d a_i - \prod_{i=1}^d b_i| \le |\sum_{i=1} a_i - b_i|$ for any $|a_i|, |b_i| \le 1$. Rewriting the kernel $k$-means cost to make use of this observation, we arrive at
\begin{align*}
    cost(C_1, \dots, C_k) &= \sum_{l=1}^k \frac{1}{2|C_l|} \sum_{x,y \in C_l} \|\phi(x) - \phi(y)\|^2 \\
    &\le \sum_{i=1}^d \sum_{l=1}^k \frac{1}{2|C_l|} \sum_{x,y \in C_l} \|\phi_i(x) - \phi_i(y)\|^2 \\ &= \sum_{i=1}^d cost_i(C_1, \dots, C_k)
\end{align*}
\end{proof}

\section{Interpretable Taylor kernels}\label{app:taylor}

\subsection{Proof of Theorem \ref{theo:taylor}}

\begin{theorem*}(\textbf{Threshold cuts in the surrogate feature space of interpretable Taylor kernels yield interpretable decision trees})
Let $K$ be product of interpretable Taylor kernels, and let $\phi$ denote its surrogate feature map. Then, a threshold cut of the form $\phi_{i,j}(x_i) \gtrless \theta$ leads to an interpretable decision tree in  $\mathbb{R}^d$.
\end{theorem*}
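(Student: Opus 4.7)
}

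The plan is to show that each surrogate coordinate $\phi_{i,j}:\mathcal{X}_i \to \mathbb{R}$ is either monotonic or unimodal on its (connected) one‑dimensional domain, so that any sublevel set $\{z \in \mathcal{X}_i : \phi_{i,j}(z) \le \theta\}$ is either a (possibly infinite) interval or the complement of one. Since $\phi_{i,j}$ depends only on the $i$‑th coordinate $x_i$, this immediately yields a cut of the form $x_i \in [\theta_1,\theta_2]$ (or its complement) as required by Definition \ref{def:tree}.

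First I would differentiate
\[
\phi_{i,j}(z) \;=\; \sqrt{g^{(j)}(0)/j!}\; z^{j} f(z)
\]
to obtain
\[
\phi_{i,j}'(z) \;=\; \sqrt{g^{(j)}(0)/j!}\; z^{j-1}\bigl(j f(z) + z f'(z)\bigr).
\]
(The nonnegativity assumption $g^{(j)}(0)\ge 0$ ensures the prefactor is a real constant; if $g^{(j)}(0)=0$ then $\phi_{i,j}\equiv 0$ and every cut is trivial.) By the definition of an interpretable Taylor kernel, the equation $z^{j-1}(zf'(z)+jf(z))=0$ has at most one solution on $\mathcal{X}_i$, so $\phi_{i,j}$ has at most one critical point on the connected domain $\mathcal{X}_i$.

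Next I would argue by a standard real-analysis case split: if $\phi_{i,j}'$ has no zero on $\mathcal{X}_i$, then $\phi_{i,j}$ is strictly monotonic there, so $\{z:\phi_{i,j}(z)\le\theta\}$ is a half‑interval in $\mathcal{X}_i$, and the corresponding cut on $x_i$ is of the form $x_i \le \theta'$ or $x_i \ge \theta'$, which is a degenerate case of $x_i \in [\theta_1,\theta_2]$ (allowing $\theta_1=-\infty$ or $\theta_2=+\infty$). If instead $\phi_{i,j}'$ has exactly one zero $z^{*}\in\mathcal{X}_i$, then $\phi_{i,j}$ is strictly monotonic on each side of $z^*$ (otherwise $\phi_{i,j}'$ would vanish at a second point by Rolle's theorem), so $\phi_{i,j}$ is unimodal. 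For a unimodal function on a connected domain, every sublevel set $\{\phi_{i,j}\le\theta\}$ and every superlevel set $\{\phi_{i,j}>\theta\}$ is an interval or the complement of an interval; which of the two occurs depends on whether $z^*$ is a local max or min of $\phi_{i,j}$, both of which fit the format in Definition \ref{def:tree}.

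Finally, I would translate back: a cut $\phi_{i,j}(x_i)\gtrless\theta$ on the surrogate features partitions the dataset exactly according to whether $x_i$ lies in a certain interval $[\theta_1,\theta_2]\subseteq\mathcal{X}_i$ or in its complement, with $\theta_1,\theta_2$ determined by solving $\phi_{i,j}(z)=\theta$ on either side of $z^*$. The main (minor) obstacle is bookkeeping the degenerate cases, namely when $\theta$ lies outside the range of $\phi_{i,j}$, when the critical point sits at the boundary of $\mathcal{X}_i$, or when $g^{(j)}(0)=0$; each of these only produces trivial partitions ($X^u_L=\emptyset$ or $X^u_L=X^u$) and is therefore compatible with Definition \ref{def:tree}.
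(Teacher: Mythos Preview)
Your proposal is correct and follows essentially the same approach as the paper: compute $\phi_{i,j}'(z)\propto z^{j-1}\bigl(zf'(z)+jf(z)\bigr)$, invoke the defining assumption that this has at most one zero on $\mathcal{X}_i$, and conclude that $\phi_{i,j}$ is monotone or unimodal so that threshold cuts translate to interval cuts. The paper's proof stops after the single-critical-point observation, whereas you additionally spell out the monotone/unimodal case split and the degenerate cases; this extra care is fine and does not deviate from the paper's argument.
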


\begin{proof}
    Recall our notion of surrogate features for interpretable Taylor kernels: For all $i \in [d]$ and $x \in X$, we define
    \begin{align*}
        \phi_i(x) = \left( f(x_i)  x_i^j \sqrt{\frac{g^{(j)}(0)}{j!}} \right)_{j=0}^M
    \end{align*}
    and refer to the concatenation $\phi(x) = \left(\phi_i(x) \right)_{i=1}^d$ as the surrogate feature map of order $M$. Note that $\phi_{i,j}'(x) \propto x_i^{j-1} \left( f'(x_i) x_i + j f(x_i) \right)$. For interpretable Taylor kernels, this expression is zero for at most one point $x$ in the domain of the kernel. Thus, when we run IMM on these surrogate features, the threshold cuts $\phi_{i,j}(x) \gtrless \theta$ can be translated to interpretable decision trees in the input space.
\end{proof}

\subsection{Proof of Theorem \ref{theo:price}}

\begin{theorem*}
    Let $C_1, \dots, C_k$ be the clusters of a dataset $X \subset \mathbb{R}^d$ derived from kernel $k$-means, where the kernel $K$ is an interpretable Taylor kernel. Denote by $\bm \Phi = \phi(X)$ the surrogate features for $K$ on $X$. Then, the interpretable decision tree obtained from Kernel IMM on $\bm \Phi$ satisfies $p(T,X) = O\left(dk^2 \right) + \frac{O(dk^2 \delta)}{cost_{opt}(X)}$, where $\delta= \delta(M)$ depends on the order $M$ of the surrogate feature map and $\lim_{M \to \infty} \delta(M) = 0$ for any dataset $X$. Thus, by adaptively choosing $M$ such that $\delta = O(cost_{opt})$, we obtain $O(dk^2)$ bounds.
\end{theorem*}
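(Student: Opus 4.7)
The plan is to decompose the price of explainability by chaining approximation bounds through the surrogate features. Observe that the surrogate feature map $\phi$ from Definition \ref{def:taylorkernels} is the explicit feature map of a \emph{truncated additive surrogate kernel} $\hat K_{add}(x,y) = \sum_{i=1}^d \hat K_i(x,y)$, where $\hat K_i(x,y) = f(x_i)f(y_i)\sum_{j=0}^M g^{(j)}(0)(x_iy_i)^j/j!$ is the order-$M$ Taylor truncation of $K_i(x,y) = f(x_i)f(y_i)g(x_iy_i)$. Since $g$ is analytic with $g^{(j)}(0)\ge 0$, the uniform error $\delta := \max_{x,y\in X}|K_{add}(x,y)-\hat K_{add}(x,y)|$ tends to $0$ as $M\to\infty$ on any compact domain, giving the claimed $\lim_{M\to\infty}\delta(M)=0$.

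The first step is to apply IMM's $O(k^2)$ guarantee in the surrogate feature space. Because $\phi$ is an explicit finite-dimensional feature map for $\hat K_{add}$ and, by Theorem \ref{theo:taylor}, coordinate-wise threshold cuts on $\phi$ translate to interpretable cuts in $\mathbb{R}^d$, running IMM on $\bm \Phi = \phi(X)$ with the kernel $k$-means reference clustering $C_1,\ldots,C_k$ returns an interpretable tree $T$ with
$$cost_{\hat K_{add}}(T,X) \le O(k^2)\cdot cost_{\hat K_{add}}(C_1,\ldots,C_k).$$
Here I rely on the observation from Appendix \ref{app:imm} that IMM behaves as a supervised procedure, so the reference clustering need not be optimal in the space where IMM operates.

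Next I would convert between $K$ and $\hat K_{add}$ on both sides. Using the kernel-based representation of the cost (Appendix \ref{app:kkmeans}), a uniform $\delta$-perturbation of a kernel changes the cost of any fixed clustering by at most $O(\delta)$ after absorbing data-dependent constants into $\delta$, so
$$cost_K(T,X) \le cost_{K_{add}}(T,X) \le cost_{\hat K_{add}}(T,X) + O(d\delta),$$
where the first inequality is the upper bound of Lemma \ref{lemma:decouple}. Reversing the chain for the reference clustering,
$$cost_{\hat K_{add}}(C_1,\ldots,C_k) \le cost_{K_{add}}(C_1,\ldots,C_k)+O(d\delta)\le d\cdot cost_K(C_1,\ldots,C_k)+O(d\delta)\le d\cdot cost_{opt}(X)+O(d\delta),$$
invoking Lemma \ref{lemma:decouple} and the (near-)optimality of the kernel $k$-means reference. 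Combining all inequalities gives $cost_K(T,X) \le O(dk^2)\,cost_{opt}(X) + O(dk^2\delta)$, which is exactly the claim after dividing by $cost_{opt}(X)$.

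The main obstacle is the multiplicative blow-up of the additive Taylor-truncation error: the $O(k^2)$ factor from IMM amplifies the $\delta$-perturbation, so the argument depends crucially on being able to drive $\delta$ to zero. The interpretable Taylor assumption plays two essential roles here — it ensures monotonicity of the $\phi_{i,j}$ (Theorem \ref{theo:taylor}) so the cuts remain interpretable, while the analyticity of $g$ guarantees that $M$ can be enlarged to make $\delta$ negligible relative to $cost_{opt}(X)$, collapsing the second term. A secondary subtlety is that the reference $C_1,\ldots,C_k$ is computed for $K$ rather than $\hat K_{add}$; fortunately, IMM's analysis in Appendix \ref{app:imm} only requires a fixed reference clustering, not optimality in the surrogate space, so no further argument is needed on this point.
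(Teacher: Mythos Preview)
Your proposal is correct and follows essentially the same route as the paper: both arguments sandwich the tree's cost using Lemma~\ref{lemma:decouple} to pass between the product kernel $K$ and the additive surrogate $\sum_i K_i$, absorb the Taylor truncation error $\delta$ in each direction, and invoke the standard $O(k^2)$ IMM bound in the surrogate feature space (noting, as you do, that IMM only needs a fixed reference clustering, not optimality in that space). The only cosmetic slip is that you define $\delta$ as the full additive-kernel error $\max|K_{add}-\hat K_{add}|$ yet still write the cost perturbation as $O(d\delta)$; with your definition it should simply be $O(\delta)$, whereas the paper's per-coordinate $\delta$ is what yields $O(d\delta)$---either convention gives the same final $O(dk^2)+O(dk^2\delta)/cost_{opt}$ bound.
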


\begin{proof}
    We begin by noting that for all $i \in [d]$ and $x,y \in X$, and any integer $M$, Taylor's formula ensures that
    \begin{align*}
        \left| K_i(x,y) - \langle \phi_i(x), \phi_i(y) \rangle \right| &= f(x_i) f(y_i) \cdot \left| \sum_{j=M}^\infty \frac{g^{(j)}(0)}{j!} (x_i y_i)^j \right| \\
        &\le \left( \max_{x \in X} |f(x_i)|^2 \right) \cdot \frac{\|g^{(j)}\|_{\infty([0,x_i y_i])} (x_i y_i)^M}{M!}
    \end{align*}
    Since $g$ is assumed to be analytic, $\lim_{M \to \infty} \langle \phi_i(x), \phi_i(y) \rangle = K_i(x,y)$ for all $x,y$. Thus, the surrogate features $\phi_i$ approximate the one-dimensional kernels $K_i$ within some $\delta>0$ that vanishes as $M \rightarrow \infty$. Now, fix a dataset $X$ and let $\hat C_1, \dots \hat C_k$ be the interpretable clusters chosen by IMM on the surrogate features $\phi$. Denote by $C_1, \dots, C_k$ the reference clusters obtained from unrestricted kernel $k$-means with respect to the original, interpretable Taylor kernel $K$. Denote by $cost$ the kernel $k$-means cost with respect to the product kernel $K$, by $cost_i$ the costs with respect to the univariate kernels $K_i$, and by $\widetilde{cost_i}$ the cost with respect to the approximate kernel implicitly defined by virtue of the surrogate feature maps $\phi_i$. Using Lemma \ref{lemma:decouple} from Appendix \ref{app:decouple} and keeping track of the approximation error, we obtain
    
    \begin{align*}
        cost(\hat C_1, \dots, \hat C_k) &\le \sum_{i=1}^d cost_i(\hat C_1, \dots, \hat C_k) \\
        &\le \sum_{i=1}^d \left( \widetilde{cost_i}(\hat C_1, \dots, \hat C_k) + O(\delta) \right) \\
        &\le O(d \delta) + \sum_{i=1}^d \left( O(k^2) \cdot \widetilde{cost_i}(C_1, \dots, C_k) \right) \\
        &\le  O(d \delta) + \sum_{i=1}^d O(k^2) \left( cost_i(C_1, \dots, C_k) + O(\delta) \right) \\
        &= O(dk^2 \delta) + \sum_{i=1}^d O(k^2) \cdot cost_i(C_1, \dots, C_k) \\
        &\le O(dk^2) \cdot cost(C_1, \dots, C_k) + O(dk^2 \delta) \\
    \end{align*}
\end{proof}

\section{Distance-based product kernels}\label{app:kernelfeatures}

Consider a distance-based product kernel $K(x,y) = \prod_{i=1}^d h(|x_i - y_i|)$ on $\mathbb{R}^d$. As pointed out in the main paper, we may define surrogate features via $\phi_{i,j}(x) = K_i(x, x^{(j)})$ were $x^{(j)}$ denotes the $j$th point in the dataset $X$. While these features lead to interpretable decision trees, they do not define features that approximate the one-dimensional kernel: For all $i \in [d]$, it must be noted that $\langle \phi_i(x), \phi_i(y) \rangle \neq K_i(x,y)$ under the standard Euclidean inner product. Of course, if we equip $\mathbb{R}^n$ with a new inner product given by $\langle \phi_i(x), \phi_i(y) \rangle = \phi_i(x) ^T \bm K_i^{-1} \phi_i(y)$, then our surrogate features remain valid features for the one-dimensional kernels $K_i$. Consequently, when setting $\phi=(\phi_i)_{i=1}^d$, we actually map to a space $\mathcal{V}$ in which the inner product is given by $\langle u, v \rangle = \sum_{i=1}^d u_i^T \bm K_i^{-1} v_i$. Running explainable clustering algorithms such as IMM in this space is possible, it however introduces additional constants that depend on the kernel matrices $\bm K_i$.

\begin{theorem*}
    Let $C_1, \dots, C_k$ be the clusters of a dataset $X \subset \mathbb{R}^d$ derived from kernel $k$-means, where the kernel $K$ is a distance-based product kernel. Denote by $\bm \Phi = \phi(X)$ the surrogate features for $K$ on $X$, where $\phi_{i,j}(x) = K_i(x, x^{(j)})$. Then, the interpretable decision tree obtained from Kernel IMM on $\bm \Phi$ satisfies $p(T,X) = O\left(C dk^2 \right)$, where $C$ depends on the dataset $X$ and the kernel $K$.
\end{theorem*}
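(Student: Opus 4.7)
The plan is to follow the same template as the proof of Theorem \ref{theo:price} in Appendix \ref{app:taylor} — decouple the $d$ input dimensions via Lemma \ref{lemma:decouple}, invoke the $O(k^2)$ guarantee for classical IMM, and recouple — but with one essential new ingredient responsible for the constant $C$. The surrogate features $\phi_i(x) = (K_i(x,x^{(j)}))_{j=1}^n$ satisfy the reproducing identity $K_i(x,y) = \phi_i(x)^\top \bm K_i^{-1}\phi_i(y)$ only under the \emph{Mahalanobis} inner product $\langle u,v\rangle_{\bm K_i^{-1}} = u^\top \bm K_i^{-1} v$, whereas Algorithm \ref{alg:kernelimm} necessarily runs IMM under the standard Euclidean inner product on $\bm \Phi$. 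The bridge between the two geometries is controlled by the spectrum of the Gram matrices $\bm K_i$.

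Concretely, apply the upper half of Lemma \ref{lemma:decouple} to the Kernel IMM output $\hat C_1,\ldots,\hat C_k$ to get $cost(\hat C) \le \sum_i cost_i(\hat C)$, where $cost_i$ is the univariate $K_i$-kernel $k$-means cost. Rewrite $cost_i(\hat C) = \sum_l \sum_{x \in \hat C_l} \|\phi_i(x) - \bar\phi_i^l\|_{\bm K_i^{-1}}^2$ where $\bar\phi_i^l$ is the arithmetic mean of $\phi_i(\hat C_l)$ (which minimizes either the Mahalanobis or Euclidean $k$-means cost, since translations agree). The standard spectral estimates
\[\|v\|_{\bm K_i^{-1}}^2 \le \tfrac{1}{\lambda_{\min}(\bm K_i)}\|v\|_E^2 \quad\text{and}\quad \|v\|_E^2 \le \lambda_{\max}(\bm K_i)\|v\|_{\bm K_i^{-1}}^2\]
then convert $cost_i(\hat C)$ to $\tfrac{1}{\lambda_{\min}(\bm K_i)} cost_i^E(\hat C)$ and $cost_i^E(C)$ to $\lambda_{\max}(\bm K_i)\, cost_i(C)$, where $cost_i^E$ denotes the ordinary Euclidean $k$-means cost on $\phi_i(X)$. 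Next, apply the classical (dimension-free, as verified in Appendix \ref{app:imm}) IMM guarantee to $\bm \Phi \in \mathbb{R}^{dn}$ with the reference labels: $cost^E(\hat C) \le O(k^2) \, cost^E(C)$ where $cost^E = \sum_i cost_i^E$ splits across coordinate blocks. Chaining the three estimates and closing with the lower half of Lemma \ref{lemma:decouple} on the reference partition yields
\[cost(\hat C) \;\le\; O(dk^2) \cdot \frac{\max_i \lambda_{\max}(\bm K_i)}{\min_i \lambda_{\min}(\bm K_i)} \cdot cost(C),\]
identifying $C := \max_i \lambda_{\max}(\bm K_i)/\min_i \lambda_{\min}(\bm K_i)$ as the sought data-and-kernel dependent constant; $p(T,X) = O(Cdk^2)$ follows by the same convention as in Theorem \ref{theo:price} that $cost(C)$ is a constant-factor approximation to $cost_{opt}(X)$.

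The main obstacle is precisely this Euclidean-versus-Mahalanobis translation. IMM's mistake-minimization argument (Lemma \ref{lemma:immlemma}) is inherently Euclidean and axis-aligned, so it cannot be run directly in the $\bm K_i^{-1}$ geometry of the RKHS; one must pay a spectral price. This explains why the bound is unavoidably data-dependent — if two points have near-identical projections onto some coordinate, $\bm K_i$ becomes ill-conditioned and $C$ blows up. A minor side-issue is that we need $\bm K_i$ to be invertible, which holds for strictly positive definite kernels such as Laplace on distinct projections; otherwise one deduplicates the projections and uses the pseudoinverse on the range of $\bm K_i$, with $C$ then defined via the ratio of the largest to the smallest \emph{nonzero} eigenvalues.
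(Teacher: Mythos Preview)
Your proposal is correct and follows the same overall strategy as the paper: decouple the product kernel into its $d$ factors via Lemma \ref{lemma:decouple}, bridge the Mahalanobis geometry $\langle u,v\rangle = u^\top\bm K_i^{-1}v$ (in which $\phi_i$ is a genuine feature map for $K_i$) and the Euclidean geometry (in which IMM is actually run) by spectral sandwich bounds, invoke the $O(k^2)$ IMM guarantee, and recouple.

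The one difference is \emph{where} the metric conversion is inserted. You treat IMM as a black box and convert at the level of the global cost, obtaining the explicit constant $C = \max_i\lambda_{\max}(\bm K_i)/\min_i\lambda_{\min}(\bm K_i)$. The paper instead reopens the IMM analysis and performs the conversion inside the per-node bound $t(u)D(u)$: it writes $D(u)$ in the $\mathcal{V}$-norm, passes to Euclidean via a factor $\kappa_1$, applies the Euclidean coordinate-wise mistake inequality (Lemma \ref{lemma:immlemma}, first part), and returns to $\mathcal{V}$ via a factor $\kappa_2$, where $\kappa_1,\kappa_2$ are suprema of Rayleigh-type quotients over only the center differences and point--center differences that actually occur. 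Since $\kappa_1\kappa_2 \le \max_i\lambda_{\max}(\bm K_i)/\min_i\lambda_{\min}(\bm K_i)$, the paper's constant is nominally sharper, while yours is more explicit and keeps IMM modular. Your remark on handling duplicate coordinate projections via the pseudoinverse is a useful addition the paper does not discuss.
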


\begin{proof}
    To prove Theorem \ref{theo:price_distancebased} we adopt the general proof strategy from IMM and again decouple dimensions as discussed in Appendix \ref{app:decouple}. Again, $cost_i$ denotes the cost with respect to each one-dimensional kernel $K_i$.
    \begin{equation}
    \begin{aligned}
        cost(\hat C_1 , \dots , \hat C_k) 
        &\le \sum_{i=1}^d cost_i(\hat C_1 , \dots , \hat C_k) \\
        &= \sum_{l=1}^{k} \sum_{x \in \hat C_l} \| \phi(x) - \mu(x) \|_\mathcal{V}^2 \\
        &\le 2 \cdot cost_\mathcal{V}(C_1, \dots, C_k) + 2 \sum_{u \in T} t(u) D(u)
    \end{aligned}    
    \end{equation}
    Now, we would in principle like to use Lemma \ref{lemma:immlemma}. However, the space $\mathcal{V}$ in which Kernel IMM runs is equipped with the non-Euclidean inner product. Thus, the second part \eqref{eq:immlemma2} of Lemma \ref{lemma:immlemma} no longer follows from the first \eqref{eq:immlemma1}. Instead, we arrive at
    \begin{align*}
        t(u) D(u) &:= t(u) \cdot \max_{a,b \in \mathcal{M}^u} \|a - b\|_\mathcal{V}^2 \\
        &= t(u) \cdot \max_{a,b \in \mathcal{M}^u} \sum_{i=1}^d (a_i-b_i)^T \bm K_i^{-1} (a_i-b_i) \\
        &\le \kappa_1 \cdot  t(u) \cdot \max_{a,b \in \mathcal{M}^u} (a-b)^T (a-b) \\
        &\le 4 \kappa_1 \cdot k  \sum_{x \in X^u_{cor}} \left( \phi(x) - c(x) \right)^T \left( \phi(x) - c(x) \right) \\
        &\le 4 \kappa_1 \kappa_2 \cdot k \sum_{x \in X^u_{cor}} \|\phi(x) - c(x)\|_\mathcal{V}^2
    \end{align*}
    where we use the first part of Lemma \ref{lemma:immlemma} in the second inequality. The constants $\kappa_1, \kappa_2$ are given by
    \begin{align*}
       \kappa_1 &= \max_{a,b \in \mathcal{M}^u} \frac{\sum_{i=1}^d (a_i - b_i)^T \bm K_i^{-1} (a_i - b_i)}{\sum_{i=1}^d (a_i- b_i)^T (a_i - b_i)} \\
       \kappa_2 &= \max_{x \in X} \frac{\sum_{i=1}^d (\phi_i(x) - c_i(x))^T (\phi_i(x) - c_i(x))}{\sum_{i=1}^d (\phi_i(x) - c_i(x))^T \bm K_i^{-1} (\phi_i(x) - c_i(x))} \\
    \end{align*}
    and depend on the data.
\end{proof}

\section{Kernel IMM for additive kernels}\label{app:additive}

Additive kernels are of the form $K(x,y) = \sum_{i=1}^{d} K_i(x,y)$ where every individual $K_i$ is a kernel on a suitable subset of $\mathbb{R}$, often the positive real line. Feature maps of additive kernels $K$ can be written as the concatenation of the univariate feature maps associated with each kernel $K_i$. If these univariate --- and hence interpretable --- maps consist of functions that have at most one change of slope, threshold cuts with respect to the feature map lead directly to interpretable decision trees. Conveniently, this is the case for several important additive kernels.

\begin{itemize}
    \item Consider Hellinger's kernel $K(x,y) = \sum_{i=1}^d \sqrt{x_i y_i}$ on a dataset $X \subset [0,\infty)^d$. Then $K$ admits the simple interpretable (and monotonic) feature map $\phi(x) = (\sqrt{x_1}, \dots, \sqrt{x_d})$.
    \item Let $K(x,y) = \sum_{i=1}^d \min \left(x_i^\beta, y_i^\beta \right)$ be the generalized histogram intersection kernel, for some parameter $\beta >0$. For any dataset $X \subset [0,\infty)^d$, there again exists a feature map that yields interpretable decision trees: Given a feature $i \in [d]$, denote by $z_1 < \dots < z_{m_i}$ the $m_i$ unique values in the set $\{ x_i^\beta \; | \; x \in X\}$. For every component $j \in [m_i]$, let
    \begin{equation}\label{eq:hik}
    \phi_{i,j}(x) = 
    \begin{cases}
        \sqrt{z_{j} - z_{j-1}} & \text{, if } j \ge 2 \text{ and }  x_i^\beta \ge z_j \\
        \sqrt{z_{j}} & \text{, if } j = 1 \\
        0 & \text{, else}
    \end{cases}
    \end{equation}
    Then the concatenation $\phi = (\phi_{i,j})_{i \in [d], j \in [m_i]}$ is composed of monotone functions. Moreover, it is straightforward to verify that for all $x,y \in X$ it holds that $K(x,y) = \langle \phi(x), \phi(y) \rangle$. Thus, $\phi$ is a valid feature map.
    \item As a third example, consider the additive $\chi^2$ kernel $K(x,y) = \sum_{i=1}^d \frac{2x_i y_i}{x_i + y_i}$ on a dataset $X \subset (0,\infty)^d$. Given some $M \in \mathbb{N}$, the kernel $K$ can be approximated from
    \begin{equation}\label{eq:approxchi2}
        \begin{aligned}
            K(x,y) &= \sum_{i=1}^d 2x_i y_i \cdot \int_0^1 t^{x_i+y_i-1} dt \\
            &= \sum_{i=1}^d \int_0^1 \sqrt{2/t} \cdot x_i t^{x_i} \cdot \sqrt{2/t} \cdot y_i t^{y_i} dt \\
            &\approx \sum_{i=1}^d \frac{1}{M} \sum_{j=1}^M \sqrt{2M/j} \cdot x_i \left(\frac{j}{M}\right)^{x_i} \cdot \sqrt{2M/j} \cdot y_i \left(\frac{j}{M}\right)^{y_i}
        \end{aligned}
    \end{equation}
    We conclude that the concatenation of all functions 
    \begin{align*}
        \phi_{i,j}(x) = \sqrt{\frac{2}{j}} \cdot x_i \left(\frac{j}{M}\right)^{x_i}
    \end{align*} defines an approximate feature map for $K$. Since each $\phi_{i,j}$ is strictly increasing for $x_i < 1/\log(M/j)$, and strictly decreasing for $x_i > 1/\log(M/j)$, they give rise to a feature map for which threshold cuts lead to interpretable decision trees. For normalized histograms we find that even low values of $M \le 5$ induce the same partitions as the additive $\chi^2$ kernel, and hence $O(k^2)$ bounds are preserved when running (Kernel) IMM with respect to the kernel induced by the map $\phi$.
    \end{itemize}

\section{Experiments}\label{app:experiments}

Let us now give some details on the experiments.

\paragraph{Main experiments for Kernel IMM.}

We first verify the approximation properties of our proposed methods on the synthetic datasets ``Pathbased'', ``Aggregation'' and ``Flame'' \citep{ClusteringDatasets} which have $k=3$, $k=7$ and $k=2$ clusters respectively. We start with linear $k$-means and IMM on all three, and then run kernel $k$-means with both the Laplace as well as the Gaussian kernel over a range of hyperparameters $\gamma$, choosing the best agreement with the ground truth as our starting point for Kernel IMM. When the Gaussian kernel is chosen, we run Kernel IMM both on the surrogate Taylor features from Definition \ref{def:taylorkernels} with $M=5$, as well as on the surrogate features based on the kernel matrix, as defined in Equation \eqref{eq:phi_distancebased}, and choose the better one. We then refine the partition induced by Kernel IMM using both Kernel ExKMC as well as Kernel Expand, constructing $m=6$, $m=10$ and $m=4$ leaves respectively. Note that at every step, Kernel ExKMC and Kernel Expand only need to check the cost of the threshold cuts at the new nodes (obtained from the previous iteration). Thus, adding $m$ leaves to an existing tree with $p$ leaves amounts to $p + 2m$ iterations over all possible threshold cuts. 

We follow the same procedure for the two real world datasets. In \textbf{Iris} \citep{fisher1936use}, there are three classes with $50$ observations each. Every class refers to a type of iris plant. As illustrated in the barplot included in Section \ref{sec:experiments}, kernel $k$-means slightly improves over $k$-means and this translates to Kernel IMM, Kernel ExKMC and Kernel Expand. The \textbf{Wisconsin breast cancer} dataset consists of $569$ observations of benign and malignant cells. The $30$-dimensional features describe characteristics of the cell nuclei observed in each image. Interestingly, IMM exactly replicates its suboptimal reference $k$-means clustering. Kernel $k$-means better identifies the ground truth, and Kernel IMM approximates it well (even achieving a slightly higher agreement with the ground truth). The same is true for Kernel ExKMC and Kernel Expand. 

\paragraph{Kernel IMM for the $\chi^2$ kernel.}

\begin{figure}[t]
  \begin{minipage}[c]{0.45\textwidth}
    \includegraphics[width=\textwidth]{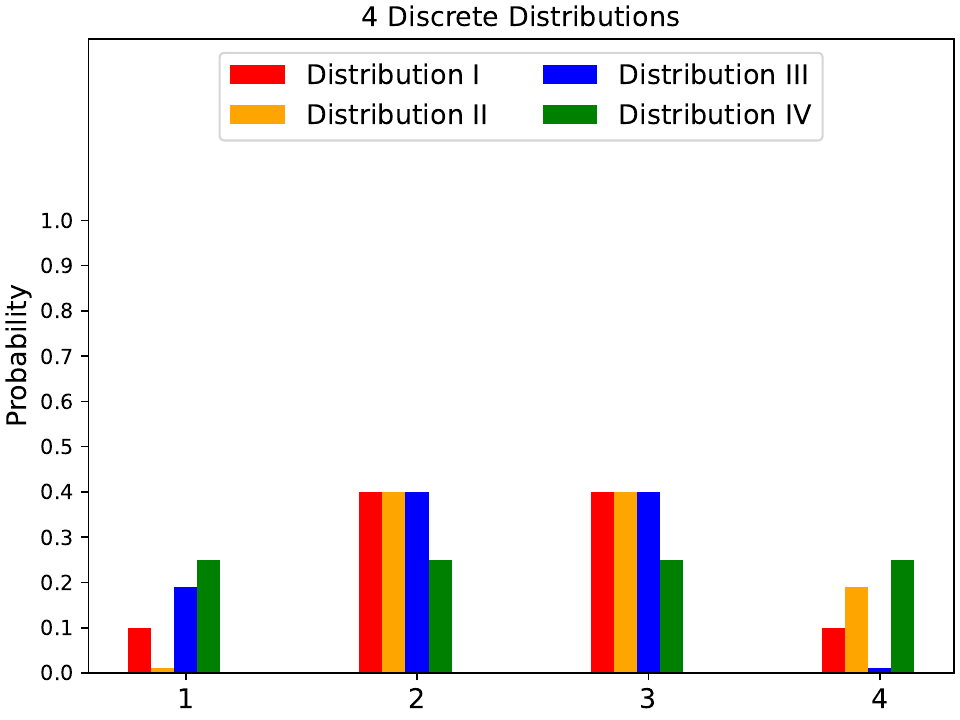}
  \end{minipage}\hfill
  \begin{minipage}[c]{0.45\textwidth}
    \includegraphics[width=\textwidth]{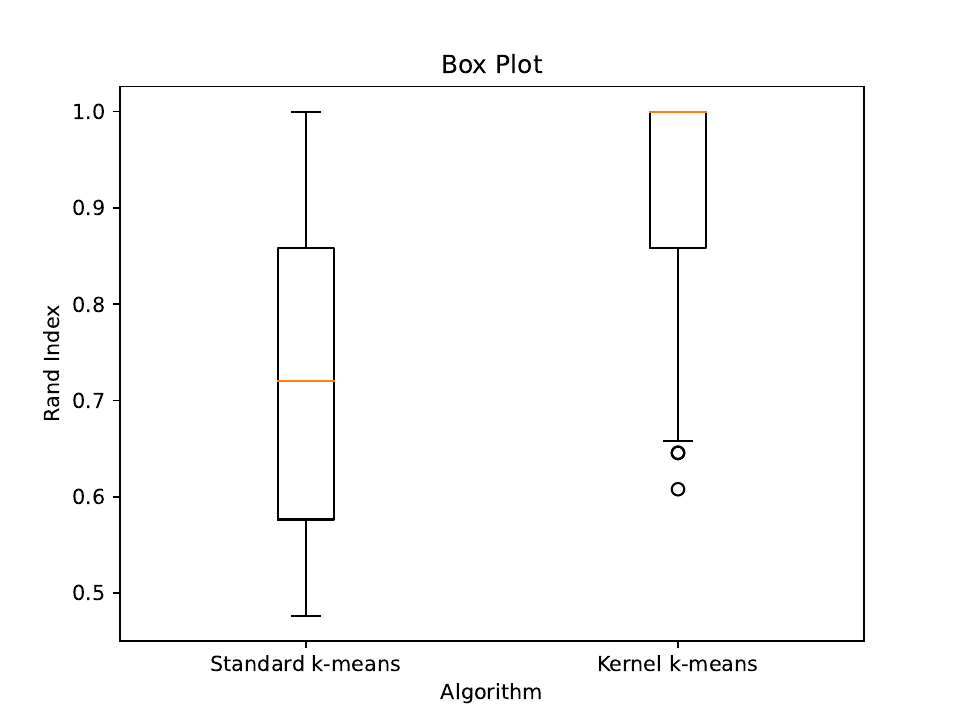}
  \end{minipage}
  \caption{We cluster samples drawn from a mixture model of four discrete distributions by checking the fraction of observations in each of four bins. The true underlying probabilities are shown in the left plot. Over $100$ draws of samples, the $\chi^2$ kernel improves over standard $k$-means in recovering the ground truth, as the boxplot on the right shows.}\label{fig:distr}
\end{figure}

The additive $\chi^2$ kernel is evaluated on a toy dataset obtained from a mixture model of four discrete distributions, with values in four bins. Figure \ref{fig:distr} shows a plot of the different distributions. For all four distribution, we draw $5$ instances of $100$ random samples, and compute the fraction of observations in each bin for every instance (thus $n=20$ and $d=4$). We repeat this procedure $100$ times. We find that the $\chi^2$ kernel achieves a Rand index that is consistently higher than the one of $k$-means (see Figure \ref{fig:distr}). This is not very surprising: The denominator of the $\chi^2$ kernel accounts for the overall number of observations in each bin, penalizing deviations in less probable bins more than in frequently visited bins --- a nonlinear characteristic that standard $k$-means lacks.

To provide some more intuition on how Kernel IMM constructs interpretable decision trees, let us now give some additional details for the $\chi^2$ kernel. Having drawn $5$ instances (containing $100$ samples) for each distribution, we compute the class probabilities for the four bins and cluster using kernel $k$-means. We then map to a higher-dimensional feature space using the features derived for the $\chi^2$ kernel (see Appendix \ref{app:additive}). We choose $M=5$ features for every dimension, and hence operate in a $D=Md=20$-dimensional space. Assuming the first threshold cut is chosen along the first axis in the feature space, Kernel IMM provides us with some $\theta$ such that we cut at
\begin{align*}
    \phi_{1,1}(x) \le \theta \iff
    \sqrt{2} x_1 \left(\frac{1}{5} \right)^{x_1} \le \theta \iff
    \frac{x_1}{5^{x_1}} \le \frac{\theta}{\sqrt{2}}
\end{align*}
All we are now left with is identifying which values of $x_1$ satisfy the above inequality, and which do not. Equivalently, we can represent this as an interval (or its complement) in the sample space.

\paragraph{Comparison with CART.}
Standard decision tree algorithms such as CART also perform well in our experiments. CART achieves a price of explainability very close to the one that Kernel IMM attains (despite it being known that no approximation results exist for CART in the standard $k$-means setting). We show a comparison of the results in Table \ref{table:cart}. For the Wisconsin breast cancer dataset, CART even improves over unconstrained kernel $k$-means (which has, in this case, not found the optimal partition).

\begin{table}[b]
  \caption{Comparison of price of explainability (PoE) between CART with one-sided cuts and $k$ leaves, and Kernel IMM.}
  \label{table:cart}
  \centering
  \begin{tabular}{l l l l}
    \toprule
    {Dataset} & {Kernel} & {PoE (Kernel IMM)} & {PoE (CART)} \\
    \midrule
    Pathbased & Gaussian & 1.06645 & 1.07004 \\
    Aggregation & Laplace & 1.00125 & 1.00125 \\
    Flame & Gaussian & 1.02256 & 1.02732 \\
    Iris & Laplace & 1.00502 & 1.00502 \\
    Cancer & Gaussian & 1.00179 & 0.99330 \\
    \bottomrule
  \end{tabular}
\end{table}

\section{ExKMC on an empty tree admits no worst-case guarantees (Theorem \ref{theo:exkmc})}
\label{app:exkmc}

\begin{theorem*}(ExKMC admits no worst-case bounds)
For any $m \in \mathbb{N}$ there exists a data set $X \subset \mathbb{R}$ such that the price of explainability on $X $ is $p(X) = 1$, IMM attain this minimum, but ExKMC (initialized on an empty tree) constructs a decision tree $T$ with $p(T,X) \ge m$.
\end{theorem*}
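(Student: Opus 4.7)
The plan is to exhibit, for each $m\in\mathbb{N}$, a one-dimensional dataset on which Kernel ExKMC run with the linear kernel makes a greedy root cut that irrecoverably mixes one cluster into two different leaves. Since the linear kernel is a valid choice of $K$, such an example suffices. Concretely, take
\[
X_L \;=\; \{0,\,1,\,L,\,L+1,\,2L,\,2L+1\},\qquad k=3,
\]
with $L\ge 2$ to be chosen later. The reference clusters are the three pairs $C_1=\{0,1\}$, $C_2=\{L,L+1\}$, $C_3=\{2L,2L+1\}$; the reference centers are $\mathcal{M}=\{0.5,\,L{+}0.5,\,2L{+}0.5\}$; and the optimal $k$-means cost is $cost_{opt}(X_L)=1.5$. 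The interpretable tree (per Definition \ref{def:tree}) using an effectively one-sided root cut $[\theta_1,\theta_2]=[-1,1.5]$ followed by $[L{-}0.5,L{+}1.5]$ recovers exactly this partition, so $p(X_L)=1$. Because the same cuts separate the three reference centers into distinct leaves with zero misclassifications, IMM also produces a tree of cost $1.5$.

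Next I analyze Kernel ExKMC on the empty tree. Two threshold cuts lying in the same gap or in the same two-point cluster interior of $X_L$ induce the same partition, so at the root there are only five structurally distinct cuts, indexed by which of the intervals $(0,1),(1,L),(L,L{+}1),(L{+}1,2L),(2L,2L{+}1)$ the threshold falls into. For each partition I minimize the two-center assignment cost \eqref{eq:exkmc} over $(j,l)\in[k]^2$; a direct enumeration gives
\[
cost_{exkmc}(\theta) \;=\; \begin{cases} 3L^2 - L + 1.5, & \theta\in(0,1)\cup(2L,2L{+}1), \\ 2L^2 + 1.5, & \theta\in(1,L)\cup(L{+}1,2L), \\ 2L^2 - 2L + 1.5, & \theta\in(L,L{+}1). \end{cases}
\]
For $L>0$ the last branch is strictly smallest, so Kernel ExKMC cuts \emph{inside} the middle cluster, assigning $\{0,1,L\}$ to $0.5$ and $\{L{+}1,2L,2L{+}1\}$ to $2L{+}0.5$; each leaf incurs cost $L^2-L+0.75$.

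To reach $k=3$ leaves, Kernel ExKMC next splits whichever current leaf yields the larger cost reduction. The best split of either leaf simply peels off the single misplaced middle-cluster point (e.g., $\{L\}$ out of $\{0,1,L\}$), dropping that leaf's cost to $0.75$, but the other leaf still contains a misplaced middle-cluster point and retains cost $L^2-L+0.75$. Hence the final three-leaf tree $T$ satisfies
\[
p(T,X_L)\;\ge\;\frac{L^2-L+1.5}{1.5},
\]
and taking $L=m+1$ makes the right-hand side at least $m$. The main technical step is the case analysis behind the displayed formula for $cost_{exkmc}(\theta)$: for each of the five partition types one must check, across the nine choices of $(c_j,c_l)\in\mathcal{M}^2$, that the stated value is indeed the minimum, and in particular that using two distinct centers for the two children always beats assigning both children to the same center once $L$ is moderately large. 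Everything else in the argument is immediate.
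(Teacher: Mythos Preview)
Your proposal is correct and follows essentially the same idea as the paper's proof: a one-dimensional, three-cluster configuration in which the greedy ExKMC root cut bisects the middle cluster (because assigning the two halves of the middle cluster to the outer centers is cheaper than assigning an entire outer cluster to a wrong center), after which one more leaf cannot repair the damage. The only cosmetic difference is the parameterization---the paper fixes the inter-cluster spacing and shrinks the intra-cluster width $\epsilon\to 0$, whereas you fix the intra-cluster width and send the spacing $L\to\infty$---and your enumeration of the five root cuts is somewhat more explicit than the paper's.
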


\begin{proof}
Consider a dataset $X$ of size $n$ in $\mathbb{R}$, consisting of three clusters with centers given by $c_1 = -1, \; c_2 = 0, \; c_3 = 1$. Assume all points of each cluster $C_i$ lie at a distance of exactly $\epsilon$ from the center $c_i$, half of them at $c_i - \epsilon$ and half of them at $c_i + \epsilon$. The optimal $k$-means cost is then given by
$$cost_{opt} = n\epsilon^2$$
At the first iteration of ExKMC, suppose it chooses some cut that does not separate points belonging to the same cluster. We may then assume that w.lo.g. $(i, \theta) = (1, 0.5)$, which implies $c_R = c_3$ and w.l.o.g. $c_L = c_1$. Thus, the total cost of this cut is
\begin{align*}
f(i,\theta) &= \frac{2n}{3} \epsilon^2 + \sum_{x \in C_2} \|x-c_1\|^2 \\ 
    &= \frac{2n}{3} \epsilon^2 + \frac{n}{6} (1-\epsilon)^2 + \frac{n}{6} (1+\epsilon)^2
\end{align*}
This cost can be improved upon when ExKMC cuts along $(i, \theta) = (1, 0)$, choosing $c_1 = c_L , \; c_3 = c_R$, leading to a cost of 
$$f(i,\theta) = \frac{2n}{3} \epsilon^2 + \frac{n}{3} (1-\epsilon)^2$$
Since ExKMC can now only add one more leaf, it inevitably separates at least $\frac{n}{6}$ points from their correct cluster center. Thus, there exists a leaf for which $\frac{2n}{6} \cdot \frac{n}{3} = \frac{n^2}{9}$ pairs of points are at distance of at least $1-2\epsilon$, and the final cost of the associated decision tree $T$ is hence
$$cost(T) \ge \frac{n \epsilon^2 }{2} + \frac{1}{n} \cdot \frac{2n^2 (1-2\epsilon)^2}{9} = \frac{n \epsilon^2}{2} + \frac{2n(1-2\epsilon)^2}{9}$$
The price of explainability of $T$ is given by
$$p(T,X) \ge \frac{n \epsilon^2}{2n \epsilon^2} + \frac{2(1-2\epsilon)^2}{9 \epsilon^2}$$
Thus, as $\epsilon \rightarrow 0$, we see that ExKMC gives rise to decision trees with an arbitrarily large price of explainability $p(T,X) \ge m$.
Note that IMM recreates the optimal partition perfectly, since it is possible to split clusters without making any mistakes. Thus $p(X) = 1$. Clearly, the fact that ExKMC restricts our choice of centers to the set of reference centers limits its performance.
\end{proof}

\section{The price of explainability can be infinite for kernel clustering (Theorem \ref{theo:unboundedprice})}
\label{app:unboundedprice}

The price of explainability, as defined in Definition \ref{def:kernelprice}, may not be finite for all kernels.

\begin{proposition}(Unbounded price for the quadratic kernel)
\label{prop::unboundedprice1}
Let $K(x,y) = \langle x, y \rangle^2$. Then, there exists a dataset $X \subset \mathbb{R}^2$ such that $p(X) = \infty$.
\end{proposition}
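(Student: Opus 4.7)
The plan is to exhibit a concrete four-point dataset in $\mathbb{R}^2$ on which the quadratic kernel collapses two pairs of points to the same feature vector, but no axis-aligned interval cut can respect this collapse. Specifically, I would take $k=2$ and
\[
X = \{(1,1),\,(-1,-1),\,(1,-1),\,(-1,1)\}.
\]
The first step is to make the feature map explicit: writing $K(x,y) = (x_1 y_1 + x_2 y_2)^2$, we obtain the feature map $\psi(x) = (x_1^2,\,\sqrt{2}\,x_1 x_2,\,x_2^2)$. Under $\psi$, both $(1,1)$ and $(-1,-1)$ map to $(1,\sqrt{2},1)$, while both $(1,-1)$ and $(-1,1)$ map to $(1,-\sqrt{2},1)$. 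Hence the partition $C_1 = \{(1,1),(-1,-1)\}$, $C_2 = \{(1,-1),(-1,1)\}$ has kernel $k$-means cost exactly $0$, so $cost_{opt}(X) = 0$.

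The second step is to rule out any interpretable tree $T$ with two leaves that realizes cost $0$. By Definition \ref{def:tree}, the single internal node of $T$ must split along one coordinate using an interval; by symmetry in $x_1$ and $x_2$, assume the cut is $x_1 \in [\theta_1,\theta_2]$. Since every point of $X$ has $x_1 \in \{-1,+1\}$, one of three things happens: both values lie inside the interval (one leaf is empty), neither does (again a trivial split), or exactly one of $\pm 1$ is inside. In the non-trivial case the induced partition is $\{(1,1),(1,-1)\}$ vs.\ $\{(-1,-1),(-1,1)\}$ (or its mirror). A direct mean-distance computation in $\psi$-space then shows the cost of this partition is strictly positive, because in each resulting cluster the two feature vectors $(1,\pm\sqrt{2},1)$ disagree in the middle coordinate, so the center lies strictly between them and both points contribute positive squared distance.

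Combining the two steps, every interpretable decision tree with two leaves induces either a degenerate split (whose cost equals $cost(X)$, computable as positive) or the non-trivial symmetric split analyzed above, which also has positive cost. Hence $\min_T cost(T,X) > 0$ while $cost_{opt}(X) = 0$, giving $p(X) = \infty$ as required. The main obstacle in writing this rigorously is just the bookkeeping for the interval cut, making sure to cover both the case where exactly one of $\pm 1$ lies in $[\theta_1,\theta_2]$ and the degenerate cases; the key structural phenomenon—that the quadratic kernel identifies antipodal points but no axis-aligned interval in $\mathbb{R}^2$ can pair $(1,1)$ with $(-1,-1)$ while excluding $(1,-1)$ and $(-1,1)$—does all the work.
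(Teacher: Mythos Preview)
Your argument is correct and follows the same overall strategy as the paper---exhibit four points in $\mathbb{R}^2$ on which kernel $k$-means (with $k=2$) achieves cost zero but every single axis-aligned split incurs positive cost. The paper instead uses the axis points $(0,\pm 1),\ (\pm 1,0)$, pairing the two points on each coordinate axis. Your corner configuration $(\pm 1,\pm 1)$ with antipodal pairing is actually the more robust choice under Definition~\ref{def:tree}: since each coordinate of your points takes only the two values $\pm 1$, an interval cut on a single coordinate can distinguish points at most by the sign of that coordinate, which never groups $(1,1)$ with $(-1,-1)$ while excluding the other two. By contrast, the paper's axis configuration \emph{is} separable by a two-sided interval cut---for instance $x_1\in[-\tfrac12,\tfrac12]$ isolates $\{(0,1),(0,-1)\}$ from $\{(1,0),(-1,0)\}$---so that argument as written only goes through for one-sided threshold cuts $x_i\gtrless\theta$. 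Your construction therefore closes a small gap relative to the interval-cut notion of interpretability adopted in Definition~\ref{def:tree}.
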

\begin{proof}
Let $x = (0,1), y = (0,-1), z = (1,0), w =(-1,0) \in \mathbb{R}^2$. Consider the quadratic kernel $K(s,t) = (\langle s, t \rangle) ^2$ operating on $X=\{x, y, z, w\}$. Then quadratic kernel 2-means achieves a cost of zero by partitioning $X$ into the two clusters $C_1 = \{x, y\}, C_2 = \{z, w\}$, since $K(s, t) = 1 $ for all $s,t \in X$ from the same cluster. However, no interpretable decision tree can reproduce this partitioning of $X$. Every decision tree $T$ with two leaves assigns one pair of points $(s,t)$ from different clusters to the same leaf. Since $K(s,t) = 0$ for this pair, $cost(T)>0$ and thus $p(X) = \infty$.
\end{proof}

This may not come as a surprise considering the special geometry of the quadratic kernel. However, the following result demonstrates that even distance-based kernels can lead to similar problems. For this, we consider the $\epsilon$-neighborhood kernel $K(x,y) = \bm1 (\|x-y\| < \epsilon)$. While $K$ is \textbf{not} a positive definite kernel (its kernel matrix may have eigenvalues), it is commonly used in spectral and kernel clustering nonetheless.

\begin{proposition}(Unbounded price for the $\epsilon$-neighborhood kernel)
\label{prop::unboundedprice2}
Let $\epsilon>0$ and let $K(x,y) = \bm1 (\|x-y\| \le \epsilon)$. Then, there exists a dataset $X$ such that $p(X) = \infty$.
\end{proposition}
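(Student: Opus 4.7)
The plan is to adapt the construction used in Proposition \ref{prop::unboundedprice1} for the quadratic kernel. I would take the dataset $X = \{x, y, z, w\} \subset \mathbb{R}^2$ with $x = (0, 1)$, $y = (0, -1)$, $z = (1, 0)$, $w = (-1, 0)$, and set $k = 2$. The pairwise Euclidean distances take only two values: $\|x - y\| = \|z - w\| = 2$, while the four ``cross'' distances $\|x - z\|, \|x - w\|, \|y - z\|, \|y - w\|$ all equal $\sqrt{2}$. I would then choose any $\epsilon$ with $\sqrt{2} < \epsilon < 2$ (and rescale $X$ if a specific $\epsilon$ is prescribed), which yields $K(x, y) = K(z, w) = 0$ and $K(s, t) = 1$ for every other pair (including each diagonal $K(s, s)$).

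Next, I would verify that the partition $C_1 = \{x, z\}$, $C_2 = \{y, w\}$ achieves zero kernel $k$-means cost. Using the standard identity $cost(C_1, C_2) = |X| - \sum_{l=1}^{2} \frac{1}{|C_l|} \sum_{s, t \in C_l} K(s, t)$ together with $K \equiv 1$ on each of $C_1, C_2$, a direct computation gives $cost(C_1, C_2) = 4 - 2 - 2 = 0$. Thus $cost_{opt}(X) = 0$, and there are in fact two zero-cost reference partitions, namely $\{\{x, z\}, \{y, w\}\}$ and $\{\{x, w\}, \{y, z\}\}$.

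The crux is to show that no interpretable decision tree (in the sense of Definition \ref{def:tree}) with two leaves produces either zero-cost partition. Such a tree is determined by a single interval cut $x_i \in [\theta_1, \theta_2]$ on one of the two coordinates. I would record that $x, y$ share the first coordinate (both equal to $0$) while $z, w$ share the second coordinate (both equal to $0$), so any interval on the first axis keeps $\{x, y\}$ in the same leaf and any interval on the second axis keeps $\{z, w\}$ in the same leaf. Enumerating all partitions of $X$ induced by an interval on one axis yields exactly $\{\{x, y\}, \{z, w\}\}$, $\{\{x, y, z\}, \{w\}\}$, $\{\{x, y, w\}, \{z\}\}$, $\{\{x, z, w\}, \{y\}\}$, and $\{\{y, z, w\}, \{x\}\}$; none of these equals $\{\{x, z\}, \{y, w\}\}$ or $\{\{x, w\}, \{y, z\}\}$. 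This enumeration is the only moderately delicate step, but it becomes routine once the coordinate values of the four points are tabulated.

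To finish, I would observe that every partition in the above list has at least one cluster containing the pair $(x, y)$ or the pair $(z, w)$, so that cluster holds a within-pair kernel value of $0$; plugging this into the same cost identity shows $cost(T, X) > 0$ for every interpretable decision tree $T$. Combined with $cost_{opt}(X) = 0$, this gives $p(X) = \infty$, as required.
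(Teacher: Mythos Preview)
Your proof is correct and considerably simpler than the paper's. The paper builds a dataset of eight points in $\mathbb{R}^{36}$: each of the $36$ coordinates encodes one of the $6\times 6$ ways to assign values in $\{0,1\}$ to the first four points and values in $\{-1,0\}$ to the other four, arranged so that all intra-cluster distances equal $\sqrt{24}$ and all inter-cluster distances equal $\sqrt{54}$. With $\epsilon=\sqrt{24}$ this yields optimal cost zero, and along every coordinate the value $0$ is shared by points from both clusters, so any interval cut leaves a mixed leaf with positive cost.

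Your construction achieves the same conclusion with four points in $\mathbb{R}^2$ by effectively ``dualizing'' Proposition~\ref{prop::unboundedprice1}: you reuse the same four points but pick $\epsilon$ so that the kernel pattern is inverted (zero on the antipodal pairs $\{x,y\}$ and $\{z,w\}$, one elsewhere), which flips the optimal clustering to the diagonal pairings $\{x,z\},\{y,w\}$ while the geometric obstruction---that $x,y$ share the first coordinate and $z,w$ share the second, so any single interval cut keeps one of these pairs together---remains intact. The gain is a much shorter, two-dimensional example; the paper's higher-dimensional construction, by contrast, has the feature that the kernel matrix is genuinely a $\{0,1\}$ block pattern aligned with the reference clusters (kernel $1$ within, $0$ between), which more closely mirrors the intuition of ``perfectly separated clusters that still cannot be cut,'' but this is not needed for the stated proposition.
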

\begin{proof}
For four points $x^{(1)}, \dots, x^{(4)}$, there are exactly $d'=6$ ways of assigning to two of them the value $0$, and to the two others the value $1$. Similarly, for four other points $x^{(5)}, \dots, x^{(8)}$, there are exactly $d'=6$ ways of assigning to two of them the value $0$, and to the two others the value $-1$. There are $d=(d')^2 = 36$ possible combinations of all these assignments. We define a $d$-dimensional cluster $C_1$ by concatenating all of the combinations for each of the points $x^{(1)} \dots, x^{(4)}$. Thus, every $x \in C_1$ is a vector in $\{0,1\}^d$. Similarly, define $C_2$ by concatenating all of the combinations for each of the points $x^{(5)}, \dots, x^{(8)}$. Thus, the vectors $y \in C_2$ are $ \in \{0,-1\}^d$. Note that for any $x,x' \in C_1$, they agree in exactly $d/3$ dimensions but differ by $1$ everywhere else. Thus $\|x-x'\|^2 = 24$. The same holds for any pair $y,y' \in C_2$. However, for any pair $x \in C_1, y \in C_2$ from different clusters, they agree on $d/4 = 9$ dimensions, but are separated by $1$ along $d/2=18$ dimensions and are separated by $2$ elsewhere. Thus, their distance is $\|x-y\|^2 = 18 + 36 = 54$. By choosing the $\epsilon$-neighborhood kernel with $\epsilon = \sqrt{24}$, we see that $K(x,x') = K(y,y') = 1$ for all $x,x' \in C_1, y,y' \in C_2$. This implies an optimal kernel $k$-means cost of zero. However, any threshold cut along any one of the $d$ dimensions will assign at least one pair $(x,y)$ from distinct clusters to the same leaf, implying $K(x,y) = 0$. Thus, the kernel $k$-means cost for this explainable partition is strictly positive. This proves $p(X) = \infty$. To produce the same result for other values of $\epsilon>0$, simply rescale the data accordingly.

\end{proof}

\end{document}